\documentclass[letterpaper, 10 pt, conference]{ieeeconf}

\IEEEoverridecommandlockouts                              % This command is only needed if 
                                                          % you want to use the \thanks command

\overrideIEEEmargins    
\pdfminorversion=4

\usepackage{graphicx} % for pdf, bitmapped graphics files
\usepackage{epsfig} % for postscript graphics files
\usepackage{amsmath} % assumes amsmath package installed
\usepackage{amssymb}  % assumes amsmath package installed
\usepackage{subfig}
\usepackage[ruled,longend]{algorithm2e}
\usepackage{multicol}
\usepackage{color}
% \usepackage{aistats2021}
% \usepackage{amsthm}
 %roman numerals
\newtheorem{theorem}{Theorem}
\newtheorem{proposition}{Proposition}
\newtheorem{lemma}{Lemma}
\newtheorem{remark}{Remark}
\newtheorem{corollary}{Corollary}
\newtheorem{assumption}{Assumption}
\DeclareMathOperator*{\argmin}{arg\,min}
%\definecolor{orange}{rgb}{0.95,0.51,0.05}
\newcommand{\mf}{\mathcal{F}}
\newcommand{\mk}{\mathcal{K}}
\newcommand{\mg}{\mathcal{G}}
% If your paper is accepted, change the options for the package
% aistats2020 as follows:
%
% \usepackage[accepted]{aistats2020}
%
% This option will print headings for the title of your paper and
% headings for the authors names, plus a copyright note at the end of
% the first column of the first page.

% If you set papersize explicitly, activate the following three lines:
%\special{papersize = 8.5in, 11in}
%\setlength{\pdfpageheight}{11in}
%\setlength{\pdfpagewidth}{8.5in}

% If you use natbib package, activate the following three lines:
% \usepackage[round]{natbib}

\title{On the Convergence of Reinforcement Learning in Nonlinear Continuous State Space Problems}

\author{ Raman Goyal, Suman Chakravorty, Ran Wang, Mohamed Naveed Gul Mohamed
\thanks{The authors are with the Department of Aerospace Engineering, Texas A\&M University, College Station, TX 77843 USA. \{\tt ramaniitrgoyal92, schakrav, rwang0417, naveed\} @tamu.edu}}

\begin{document}

% \aistatstitle{Supplementary Material \RomanNumeralCaps{3}: Paper 3 as mentioned in the main paper}
% \vspace{-3mm}
\maketitle  
%\aistatsauthor{ Suman Chakravorty\And Author 2 \And  Author 3 \And Author 4}

%\aistatsaddress{ Institution 1 \And  Institution 2 \And Institution 3 \And Institution 4} 
% If your paper is accepted and the title of your paper is very long,
% the style will print as headings an error message. Use the following
% command to supply a shorter title of your paper so that it can be
% used as headings.
%
%\runningtitle{I use this title instead because the last one was very long}

% If your paper is accepted and the number of authors is large, the
% style will print as headings an error message. Use the following
% command to supply a shorter version of the authors names so that
% they can be used as headings (for example, use only the surnames)
%
%\runningauthor{Surname 1, Surname 2, Surname 3, ...., Surname n}

% \twocolumn[

% ]

\begin{abstract}
We consider the problem of Reinforcement Learning for nonlinear stochastic dynamical systems. We show that in the RL setting, there is an inherent ``Curse of Variance" in addition to Bellman's infamous ``Curse of Dimensionality", in particular, we show that the variance in the solution grows factorial-exponentially in the order of the approximation. A fundamental consequence is that this precludes the search for anything other than ``local" feedback solutions in RL, in order to control the explosive variance growth, and thus, ensure accuracy. We further show that the deterministic optimal control has a perturbation structure, in that the higher order terms do not affect the calculation of lower order terms, which can be utilized in RL to get accurate local solutions.
\end{abstract} 
\begin{keywords}
RL, Optimal control, Nonlinear systems
\end{keywords}

% \vspace{-2mm}
\section{Introduction}\label{sec1}

A large class of decision making problems under uncertainty can be posed as a nonlinear stochastic optimal control problem that requires the solution of an associated Dynamic Programming (DP) problem, however, as the state dimension increases, the computational complexity
goes up exponentially in the state dimension \cite{bertsekas1}:  the manifestation of Bellman's infamous ``curse of dimensionality (CoD)" \cite{bellman}. To understand the CoD better, consider the simpler problem of
estimating the cost-to-go function of a feedback policy $\mu_t(\cdot)$. %Under policy iteration, it can be shown that solving the DP problem amounts to solving a sequence of such policy evaluation problems~\cite{bertsekas1}. 
%Under $\mu_t(\cdot)$, the MDP reduces to a Markov Chain with the probability distribution $p^{\mu_t}(x'|x) = p(x'|x,\mu_t(x))$. 
Let us further assume that the cost-to-go function can be ``linearly parametrized'' as: $J_t^{\mu} (x) = \sum_{i = 1}^M \alpha^i_t \phi_i(x)$, where the $\phi_i(x)$'s are some \emph{a priori} basis functions. Then the problem of estimating $J_t^{\mu} (x)$ becomes that of
estimating the parameters $\bar{\alpha}_t = \{ \alpha^1_t, \cdots, \alpha^M_t \}$.
This can be shown to be the recursive solution of the linear equations
$\bar{\alpha}_t = \bar{c}_t + L_t\bar{\alpha}_{t+1}, \text{ where } \bar{c}_t = [c^i_t],$
%with $ c_i = \int \underbrace{c(x, \mu(x))}_{c^{\mu} (x)}  \phi_i (x) dx,$ %\, i=1,2,\cdots N$,
with $ c^i_t = \int c(x, \mu_t(x)) \phi_i (x) \,dx,$ %\, i=1,2,\cdots N$,
and 
$
L_t^{ij} = \int \int p^{\mu_t} (x'|x) \phi_i (x') \phi_j(x) \,dx' \,dx, \; i, j \in \{1,\dots,M\},
$ where $p^{\mu_t}(./.)$ is the transition density of the Markov chain under policy $\mu_t$.
%Given the basis set $\Phi = \{ \phi_i (x) \}$, the key to $J_t^{\mu}$ is the
%estimation of the parameters $L_t^{ij}$. 
This can be done using
numerical quadratures given knowledge
of the model $p^{\mu}(x'|x)$, termed Approximate DP (ADP), or alternatively, in Reinforcement Learning (RL), simulations of the process under the policy $\mu_t$, $x_t \xrightarrow{\mu_t(x_t)} x_{t + 1} \rightarrow \cdots $, is used to get an approximation of the $L^{ij}_t$ by sampling, %$L^{ij}_t \approx \frac{1}{M} \sum_{t = 0}^{M - 1} \phi_i (x_{t + 1}) \phi_j (x_t)$, 
and solve the equation above either batchwise or recursively \cite{parr3,bertsekas1}. 
But, as the dimension $d~$ increases, the number of basis functions and the number of evaluations required to evaluate the integrals go up exponentially. %, and the time $M$ required for convergence is unreasonably high since the number of simulations, or evaluations, required to evaluate the $L_{ij}$ also grows exponentially. 
%This may, perhaps, be counteracted by choosing a ``sparse'' basis but it is not clear how to accomplish this in advance, given a particular problem. 
There has been recent success using the Deep RL paradigm where deep neural networks are used as nonlinear function approximators to keep the parametrization tractable \cite{RLHD1, RLHD2, RLHD3, RLHD4, RLHD5}, however, the training times required for these approaches is still prohibitive.
Hence, the primary problem with ADP/ RL techniques is the CoD inherent in the complex representation of the cost-to-go function, and the exponentially large number of evaluations required for its estimation. In this paper, we show that there is an additional ``Curse of Variance" that afflicts the RL solution, the fact that the variance grows at a factorial-exponential rate in the order of the approximation, that precludes us from solving for higher order approximations of the feedback law. 
Prior research has focused in some detail on the sample complexity of RL for finite control problems \cite{azar2012sample,kakade2003sample,munos2008finite}, and  the case of optimal Linear Quadratic Control (LQR) in continuous state and control spaces \cite{Recht2019sample,recht2019tour}.
%Paper \cite{azar2012sample} gives the sample complexity to put a PAC bound on the $l_\infty$ norm of the difference of the optimal action-value  function and its sample estimate for the model-based value iteration algorithm with finite state-action pairs. Paper \cite{munos2008finite} also gives a PAC bound for sampling-based fitted value iteration for discounted-reward MDPs to solve infinite state-space problems. 
We study the general nonlinear problem, and show the scale of the variance inherent in an RL estimate. 
Albeit anecdotal and empirical evidence of the variance phenomenon has always existed in the RL literature \cite{henderson2018deep}, we believe we are the first to exactly enumerate the factorial-exponential growth and its consequences: it is necessary that we look for local solutions in order to find accurate solutions, that stochastic control problems are fundamentally intractable, and the best we can hope for is a suitably accurate deterministic approximation (see points 1-5 of contributions below). 
However, this does not mean we need to give up on global optimality. In \cite{mohamed2020optimality}, we established the local optimality of the deterministic feedback law, in that the nominal (zero noise) action, and the linear feedback action, of the optimal stochastic and deterministic policies are close to fourth order in a small noise parameter, starting at any given state, which, when allied with replanning, recovers a near-optimal solution. Thus, a local solution allied with replanning is an efficient and near-optimal way to solve nonlinear stochastic control problem rather than solve for a global (higher order) solution. This gets us to the context of Model Predictive Control (MPC).
%A final notable fact is that most ADP/RL techniques actually train on the deterministic system, in the sense that the only noise they have injected in training is an asymptotically vanishing exploration noise in the control, and not a persistent process noise. Our experiments indicate that training under such process noise makes the techniques either take far longer to converge, or leads to poor control policies and/or lack of convergence (see Fig.~\ref{Stoch_training}). Thus, this highlights the difficulty of solving the full blown stochastic problem: Wonham had sounded an early warning in 1969, recounted in \cite{Mayne_3}, that questioned the tractability over the marginal improvement obtained thereof.
% \textcolor{red}{Should we not put the Arxiv reports as formal references?}

In the case of continuous state, control and observation space problems, the
MPC \cite{Mayne_1, Mayne_2} approach has been used with a lot of success in the control system and robotics community.   %For deterministic
%systems, the process results in solving the original DP problem in a recursive
%online fashion. 
However, stochastic control problems, and the control of
uncertain systems in general, is still an unresolved problem in MPC. As noted in \cite{Mayne_1}, the problem arises due to
the fact that in stochastic control problems, the MPC optimization at every
time step cannot be over deterministic control sequences, but rather has to be
over feedback policies, which is, in general, intractable. Thus, the tube-based MPC approach, and its stochastic counterparts,
typically consider linear systems \cite{T-MPC1, T-MPC2,T-MPC3} for which a
linear parametrization of the feedback policy suffices but the methods become intractable when dealing with nonlinear systems \cite{Mayne_3}. %In recent work, we have introduced a ``decoupling principle'' that allows us to tractably solve such stochastic optimal control problems in a near-optimal fashion, with applications to highly efficient RL and MPC implementations \cite{D2C1.0,T-PFC}. However, this prior work required a small noise assumption. In this work, we relax this small noise assumption to show, via extensive empirical evaluation, that even when the noise is not small, a \emph{replan-when-necessary} modification of the decoupled planning approach, akin to
%In more recent work, event-triggered MPC \cite{ETMPC1, ETMPC2} keeps the online planning computationally efficient by triggering replanning in an event driven fashion rather than at every time step.
%We note that event-triggered MPC inherits the same issues mentioned above with respect to the stochastic control problem, and consequently, the techniques are intractable for nonlinear systems. 
In this paper, we show that the stochastic problem is fundamentally intractable, and if solved in an RL fashion, leads to a very high variance in the solution. In particular, it becomes necessary to look for local deterministic solutions to ensure accuracy (which are locally optimal due to the results of  \cite{mohamed2020optimality}),  and re-plan when necessary as in MPC, to recover global optimality. 

We summarize our contributions as follows.

% \begin{itemize}
% \item 
1. It is fundamentally intractable to solve for a high order approximation of a feedback law for optimal control via RL (global/ nonlocal), since the variance of the solution grows factorial-exponentially in the order of the approximation.\\
% \item 
2. The deterministic problem has a perturbation structure, in that higher order terms do not affect the calculation of lower order terms, and thus, when a model is known, the calculations can be closed at any order without affecting the accuracy of the lower order terms.\\
% \item 
3. If the deterministic problem is solved in an RL fashion, then an accurate solution can be found, if and only if we concentrate on a suitably local solution, enforced via constraining the random exploration around a nominal trajectory. \\
% \item 
4. The stochastic problem is intractable in the sense that it lacks a perturbation structure, and if solved via RL, this necessarily implies very high bias as well as variance, and hence, inaccuracy in the result.\\
% \item
5. The perturbation structure and locality of the solution are key to an accurate RL implementation.\\
\vspace{-3mm}
\textbf{Outline of Paper.}
The rest of the document is organized as follows: Section II outlines the Problem Formulation, Section III studies the convergence of Policy evaluation in a finite time RL setting and the resulting variance in the solution. Section IV derives a perturbation structure inherent to the deterministic policy evaluation problem, and shows how to leverage this for accurate local RL solutions. Section V concentrates on the stochastic policy evaluation problem and shows its fundamental intractability in terms of the lack of a perturbation structure, and the resulting high variance for RL solutions. Section VI gives empirical results in a simple example to validate the theoretical development.
% The supplementary material provides the connection to the discounted infinite horizon case and Papers 1 and 2 of the three part series. 
% Section VI shows how the analysis can be extended to discounted infinite horizon problems. Finally, Section VII outlines a ``Decoupling" result that considers the full blown stochastic optimal control problem (not just policy evaluation), and allows us to make a near-optimal deterministic approximation to a stochastic control problem.  
%%%%%%%%%%%%%%%%%%%%%%%%%%%%%%%%%%%%%%%%%
\vspace{-2mm}
\section{Problem Formulation}
\label{section:prob}
% \vspace{-3mm}
The problem of control under uncertainty can be formulated as a stochastic optimal control problem in the space of feedback policies. We assume here that the uncertainty in the problem lies in the system's process model. \\
%
% \vspace{-2mm}
% \subsection{System Model:}
% \vspace{-2mm}
\textit{\textbf{{System Model:}}} For a dynamic system, we denote the state and control vectors by $x_t \in \ \mathbb{X} \subset \ \mathbb{R}^{n_x}$ and $u_t \in \ \mathbb{U} \subset \ \mathbb{R}^{n_u}$ respectively at time $t$. The motion model $h : \mathbb{X} \times \mathbb{U} \times \mathbb{R}^{n_u}   \rightarrow \mathbb{X} $ is given by the equation 
\begin{equation}
    x_{t+1}= h(x_t, u_t, w_t); \  w_t \sim \mathcal{N}(0, {\Sigma}_{w_t}) 
    \label{eq:model},
\end{equation}
where \{$w_t$\} are zero mean independent, identically distributed (i.i.d) random sequences with variance ${\Sigma}_{w_t}$.\\

\vspace{-3mm}
% \subsection{Stochastic optimal control problem:} %
% \vspace{-2mm}
\textit{\textbf{Stochastic optimal control problem:}} The stochastic optimal control problem for a dynamic system with initial state $x_0$ is defined as:
\vspace{-1mm}
\begin{equation}
    J_{\pi^{*}}(x_0) = \min_{\pi} \ E \left[{\sum^{T-1}_{t=0} c(x_t, \pi_t (x_t)) + g(x_T)}\right],
\end{equation}
$%$\begin{equation}
   s.t.\ x_{t+1} = h(x_t, \pi_t (x_t), w_t),
$%\end{equation}
~where:
%$\begin{description}
%\item~
%\begin{itemize}
the optimization is over feedback policies $\pi := \{ \pi_0, \pi_1, \ldots, \pi_{T-1} \} $ and $\pi_t(\cdot)$: $\mathbb{X} \rightarrow \mathbb{U}$ specifies an action given the state, $u_t = \pi_t(x_t)$;
$J_{\pi^{*}}(\cdot): \mathbb{X} \rightarrow \mathbb{R}$  is the cost function on executing the optimal policy $\pi^{*}$; $c_t(\cdot,\cdot): \mathbb{X} \times \mathbb{U} \rightarrow \mathbb{R} $  is the one-step cost function; $g(\cdot): \mathbb{X} \rightarrow \mathbb{R}$ is the terminal cost function; $T$ is the horizon of the problem.
The solution to the above problem is given by the Dynamic Programming equation:
\begin{equation}
    J_t(x) = \min_u [c(x,u) + E[J_{t+1}(x')]],
\end{equation}
where $x' \sim p(./x,u)$, and $p(./x,u)$ denotes the transition density of the state at the next time step arising form the system dynamics, given the control $u$ is taken at state $x$, solved with the terminal condition $J_T(x) = g(x)$. The DP equation can be solved using the so-called Policy Iteration method, where given a time varying feedback policy $\pi^{(k)}_t(\cdot)$, one first solves for the cost function corresponding to it:
\begin{equation}
J^{(k)}_t (x) = c(x, \pi^{(k)}_t(x)) + E[J_{t+1}^{(k)}(x')],
\end{equation}
where $x'\sim p(./x,\pi^{(k)}_t(x))$, and the above equation is solved with the terminal condition $J_T^{(k)}(x) = g(x)$, which is followed by a policy improvement step: 
\begin{equation}
  \pi^{(k+1)}_t (x) = \argmin_u [c(x,u) + E[J_{t+1}^{(k)}(x')]],
\end{equation}
where $x'\sim p(./x,u)$. This process is followed till convergence, starting with some initial time varying policy $\pi^{(0)}_t(x)$ \cite{bertsekas1}.
%\end{itemize}
%\end{description}
%
%\end{comment}
\vspace{-2mm}
\section{Convergence of Policy Evaluation}
% \vspace{-2mm}
In the following, we shall concentrate on the Policy Evaluation (PE) part of Policy iteration, in particular, a single Policy Evaluation step, to show the convergence issues inherent, and at the end of this section, outline the issues arising from the dynamic recursion. We shall consider a synchronous model of computing, i.e., where all the experiments are done first, and the cost functions at any step updated using all the experiments. We consider the deterministic scalar state case for simplicity, the generalization to the vector state case is straightforward and the stochastic case is treated in Section 5.
Let us rewrite the policy evaluation equation from above as: 
\begin{equation}\label{pol_eval}
    J_t(x_t) = c_t(x_t) + J_{t+1}(f(x_t)),
\end{equation}
where the deterministic dynamics are $x_{t+1} = f(x_t)$, and given that the terminal cost $J_T(x_T) = g(x_T)$. In the context of Policy Iteration, the dynamics corresponds to the closed loop under some feedback policy $\pi(x)$, i.e., $f(x) = h(x, \pi(x), 0)$. Note that the policy is, in general, time varying, and thus, the dynamics should also be time varying. But we consider time invariant dynamics for simplicity, and all the results obtained below generalize to the time varying case in a straightforward fashion.

% \vspace{-2mm}
\paragraph{Computing Model} Suppose that we have basis functions $\{\phi^1(x), \cdots \phi^N(x)\}$ such that any $J_t(x) = \sum_i \alpha_t^i \phi^i(x)$ for suitably chosen coefficients $\alpha_t^i$, and such that $c_t(x) = \sum_i c_t^i \phi^i(x)$. Suppose now that we are given $R$ samples from the dynamical system, say $\{x_1^{(k)}, \cdots x_t^{(k)}\}$, for $k = 1, \cdots R,$ that are sampled from some, in general, time varying density $p_t(\cdot)$. We leave the question of what this density ought to be to later on in our development. Given the samples, we write:
\begin{equation}
    J_t(x^{(k)}_t) = c(x_t^{(k)}) + J_{t+1}(f(x_t^{(k)})) + v^{(k)}_t,
\end{equation}
where $v^{(k)}_t$ is an independent identically distributed (i.i.d.) noise sequence for all time steps $t$. Representing the cost functions in terms of the basis functions, we obtain:
\begin{equation}
\bar{\alpha}_t \phi_t^{(k)} = \bar{c}_t \phi_t^{(k)} + \bar{\alpha}_{t+1} \phi_{t+1}^{(k)} + v^{(k)}_t,
\end{equation}
where $\bar{\alpha}_t^{(k)} = \begin{bmatrix} \alpha_t^{1} \cdots \alpha_T^{N} \end{bmatrix}$, $\bar{c}_t= \begin{bmatrix} c_t^1 \cdots c_t^N \end{bmatrix}$, $\phi_t^{(k)} = \begin{bmatrix} \phi^1 (x_t^{(k)})\\ \vdots\\ \phi^N(x_t^{(k)})\end{bmatrix}$, and $\phi_{t+1}^{(k)} = \begin{bmatrix} \phi^1 (x_{t+1}^{(k)})\\ \vdots\\ \phi^N(x_{t+1}^{(k)})\end{bmatrix}$, where note that $x_{t+1}^{(k)} = f(x_t^{(k)})$. 

\paragraph{RL as Least Squares} Then, we may view the above as the following least squares problem:
\begin{equation}
    \bar{\alpha}_t^R = \argmin_{\bar{\alpha}_t} ||\bar{\alpha_t} \Phi_t^R - \bar{c}_t \Phi_t^R - \bar{\alpha}_{t+1}^R \Phi_{t+1}^R||^2,
\end{equation}
where $\Phi_t^R = \begin{bmatrix} \phi_t^{(1)}, \cdots, \phi_t^{(R)} \end{bmatrix}$, and the supercase $R$ is used to denote the solution after $R$ samples. In this case, we are sweeping back in time starting at the final time $T$, and thus, it is assumed above that we have solved for $\alpha_{t+1}^R$ already. The solution to this problem is standard and given by:
\begin{equation} \label{RL-LS}
    \bar{\alpha}_t^R = \bar{c}_t + \bar{\alpha}_{t+1}^R \Phi_{t+1}^R \Phi_t^{R'} (\Phi_t^R \Phi_t^{R'})^{-1},
\end{equation}
where $A'$ denotes the transpose of a matrix $A$.\\
Now, we shall establish some properties of the least squares (LS) solution above in the context of RL. First, let us find the ``true" solution of the Policy Evaluation equation \eqref{pol_eval}. First, we make the following assumption.\\

\begin{assumption} \label{basis}
There exist a set of constants $\beta_{ij}$, $i= 1, 2, \cdots N$,  and $j = 1, \cdots N'$, where $N' > N$, such that for any $\phi^i(f(x)) = \sum_{j=1}^{N'} \beta^{ij} \phi^j(x)$. 
\end{assumption}
The reason $N'> N$ is that, in general, unless $f(\cdot)$ is linear, it will require more basis functions to represent $J_t(x)$ than $J_{t+1}(x)$. The reason is that if $\phi^i$ is an $i$ degree polynomial, then $\phi^i(f(x))$ will be a $ki$ degree polynomial if $f(\cdot)$ is a $k$ degree polynomial.
Thus, in general, we will need an expanding basis to represent the functions $J_t(\cdot)$ as we sweep back in time from $T$ to $0$. 
Then, we can characterize the ``true" solution to the Policy Evaluation equation \eqref{pol_eval} as follows.\\

\begin{proposition}\label{PE-true}
The true solution to the policy evaluation equation \eqref{pol_eval} is given by:
$\bar{\alpha}_t^* = \bar{c}_t + \bar{\alpha}_{t+1}^* B_t$, where $B_t = \begin{bmatrix} \beta^{11}, \cdots, \beta^{1N_t}\\ \ddots \\ \beta^{N_{t+1} 1}, \cdots, \beta^{N_{t+1}N_t} \end{bmatrix}$, where $N_{t+1}$ is the number of basis functions required to represent $J_{t+1}(\cdot)$ and $N_t$ is the number of basis functions required to represent $J_t(\cdot)$. 
\end{proposition}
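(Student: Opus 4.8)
The plan is to substitute the basis-function expansions directly into the policy evaluation equation \eqref{pol_eval} and match coefficients, exploiting Assumption \ref{basis} to re-express $J_{t+1}(f(x))$ in the basis at time $t$. First I would write $J_t(x) = \sum_{i=1}^{N_t} \alpha_t^{*i}\,\phi^i(x)$ and $c_t(x) = \sum_{i=1}^{N_t} c_t^i\,\phi^i(x)$, and similarly $J_{t+1}(x) = \sum_{i=1}^{N_{t+1}} \alpha_{t+1}^{*i}\,\phi^i(x)$. The key move is then to compute $J_{t+1}(f(x)) = \sum_{i=1}^{N_{t+1}} \alpha_{t+1}^{*i}\,\phi^i(f(x))$ and apply Assumption \ref{basis} to each term, giving $\phi^i(f(x)) = \sum_{j=1}^{N_t} \beta^{ij}\,\phi^j(x)$; here I would note that the index range is exactly the right one because $J_{t+1}$ needs $N_{t+1}$ basis functions while its composition with $f$ needs $N_t$ (this is the content of the discussion following the assumption, and it is what pins down the dimensions of $B_t$).

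Next I would collect terms: substituting back, $J_{t+1}(f(x)) = \sum_{j=1}^{N_t}\big(\sum_{i=1}^{N_{t+1}} \alpha_{t+1}^{*i}\,\beta^{ij}\big)\phi^j(x) = \sum_{j=1}^{N_t}(\bar\alpha_{t+1}^* B_t)^j\,\phi^j(x)$, where $B_t$ is the $N_{t+1}\times N_t$ matrix with entries $\beta^{ij}$ as displayed in the statement. Plugging this and the expansion of $c_t$ into \eqref{pol_eval} yields $\sum_j \alpha_t^{*j}\phi^j(x) = \sum_j\big(c_t^j + (\bar\alpha_{t+1}^* B_t)^j\big)\phi^j(x)$ for all $x$. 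Finally, invoking linear independence of the basis $\{\phi^1,\dots,\phi^{N_t}\}$ (implicit in the "linearly parametrized" setup — I would state this explicitly as a standing assumption), I equate coefficients of $\phi^j$ on both sides to obtain $\bar\alpha_t^* = \bar c_t + \bar\alpha_{t+1}^* B_t$, which is the claimed recursion, closed with the terminal condition $\bar\alpha_T^*$ encoding $g(\cdot)$.

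The argument is essentially a bookkeeping exercise once Assumption \ref{basis} is in hand, so I do not expect a serious obstacle; the one point requiring care is the dimension/index accounting — making sure the $\beta^{ij}$ appearing in $B_t$ are the ones expanding $\phi^i(f(\cdot))$ for $i$ up to $N_{t+1}$ in terms of $\phi^j$ for $j$ up to $N_t$, consistent with a time-varying (growing, as $t$ decreases) basis size rather than the fixed $N,N'$ of the assumption's statement. I would also need linear independence of the basis to legitimately match coefficients; if the basis is merely spanning and not independent, the recursion still produces a valid solution but not a unique coefficient vector, so I would either assume independence or phrase the conclusion as "a solution is given by." Everything else follows by direct substitution.
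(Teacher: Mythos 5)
Your proposal is correct and follows essentially the same route as the paper: substitute the basis expansions into \eqref{pol_eval}, apply Assumption \ref{basis} to re-express $\phi^i(f(x))$ in the time-$t$ basis, and identify coefficients. The only cosmetic difference is that the paper performs the identification by taking inner products with each $\phi^k$ under the sampling density $p_t$ and cancelling the resulting Gram matrix $\mathcal{G}_t$ (the Galerkin projection it highlights in Remark 1), whereas you equate coefficients directly via linear independence of the basis --- the two are equivalent precisely when $\mathcal{G}_t$ is invertible, which is the same independence assumption you correctly flag.
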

\noindent 
% \emph{Proof:} See Appendix.
\emph{Proof:} The proof follows by substituting for function $\phi(f(x))$ using \textit{Assumption 1} to equation \eqref{pol_eval} and then solving for the coefficients $\bar{\alpha}^*_t$. (Detailed proof: See Appendix).

\vspace{2mm}
\begin{remark}
The solution above is the discrete time analog of the classical Galerkin procedure for solving Partial Differential Equations (PDEs) \cite{Courant-Hilbert}. This should not come as a surprise since the Policy Evaluation equation \eqref{pol_eval} is really a discrete time analog of the continuous time PDE: $\frac{\partial J}{\partial t} + \bar{c} + \bar{f}\frac{\partial J}{\partial x} = 0$, solved with the terminal condition $J_T(x) = g(x)$, and where $\dot{x} = \bar{f}(x)$ represents the continuous dynamics and the continuous time cost is given by the integral $\int_0^T \bar{c}(x) dt$.
\end{remark}

Next, we show that the RL least squares solution \eqref{RL-LS} converges to the above true solution in the mean square sense as the number of samples $R$ becomes large.\\

\begin{proposition}\label{PE-conv}
\textit{\textbf{PE convergence.}} Let Assumption \ref{basis} hold. Further, let the number of basis functions at time $t$ required be $N_t$. Given that all the required basis functions at time $t$ are considered, the RL least square estimate \eqref{RL-LS} converges to the true solution in the mean square sense. 

\noindent \emph{Proof:} See Appendix.
% \textit{arxiv report} \cite{chakravorty2020convergence}.
\end{proposition}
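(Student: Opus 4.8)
\emph{Proof proposal.} The plan is to use the linear structure furnished by Assumption~\ref{basis} to collapse the least-squares recursion \eqref{RL-LS} onto the true recursion of Proposition~\ref{PE-true} exactly, up to a zero-mean sampling term, and then to show that this term vanishes in the mean square sense as $R \to \infty$. The first step is the key \emph{feature identity}: evaluating Assumption~\ref{basis} at each sampled point and using $x_{t+1}^{(k)} = f(x_t^{(k)})$ gives $\phi^i(x_{t+1}^{(k)}) = \sum_{j} \beta^{ij}\phi^j(x_t^{(k)})$, i.e.\ $\phi_{t+1}^{(k)} = B_t\,\phi_t^{(k)}$ with $B_t$ exactly the matrix of Proposition~\ref{PE-true}; stacking over $k$, $\Phi_{t+1}^R = B_t\,\Phi_t^R$.

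Substituting $\Phi_{t+1}^R = B_t\Phi_t^R$ into \eqref{RL-LS} and using $\Phi_t^R\Phi_t^{R'}(\Phi_t^R\Phi_t^{R'})^{-1} = I$ on the event that the Gram matrix $\Phi_t^R\Phi_t^{R'}$ is nonsingular, the estimate collapses to
\begin{equation*}
\bar{\alpha}_t^R = \bar{c}_t + \bar{\alpha}_{t+1}^R B_t + \varepsilon_t^R, \qquad \varepsilon_t^R := V_t^R\, \Phi_t^{R'}\big(\Phi_t^R \Phi_t^{R'}\big)^{-1},
\end{equation*}
where $V_t^R = [\,v_t^{(1)}, \ldots, v_t^{(R)}\,]$ carries the i.i.d.\ noise of the sampled policy-evaluation equation (in the noiseless idealization \eqref{RL-LS}, $\varepsilon_t^R \equiv 0$). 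Comparing with Proposition~\ref{PE-true}, the error $\bar{e}_t^R := \bar{\alpha}_t^R - \bar{\alpha}_t^*$ obeys the backward recursion $\bar{e}_t^R = \bar{e}_{t+1}^R B_t + \varepsilon_t^R$ with $\bar{e}_T^R = 0$, since $J_T = g$ is known exactly.

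It then remains to show $\varepsilon_t^R \to 0$ in mean square and to close a downward induction on $t$. Writing $\varepsilon_t^R = \big(\tfrac1R V_t^R\Phi_t^{R'}\big)\big(\tfrac1R\Phi_t^R\Phi_t^{R'}\big)^{-1}$, the strong law of large numbers gives $\tfrac1R\Phi_t^R\Phi_t^{R'} = \tfrac1R\sum_k \phi_t^{(k)}(\phi_t^{(k)})' \to G_t := E_{p_t}[\phi_t\phi_t']$, so that for $R$ large the Gram matrix is invertible with probability tending to one and its inverse converges to $G_t^{-1}$; here one needs the exploration density $p_t$ to be rich enough that $G_t \succ 0$ and to have finite second moments on the $\phi^i$. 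Since each $v_t^{(k)}$ is zero mean and independent of $x_t^{(k)}$, the factor $\tfrac1R V_t^R\Phi_t^{R'} = \tfrac1R\sum_k v_t^{(k)}(\phi_t^{(k)})'$ has mean zero and variance $O(1/R)$, hence tends to $0$ in $L^2$; multiplying by the bounded, convergent inverse Gram factor gives $\varepsilon_t^R \to 0$ in mean square. With $\bar{e}_T^R = 0$ as the base case, the $L^2$ triangle inequality applied to the error recursion yields $\|\bar{e}_t^R\| \le \|B_t\|\,\|\bar{e}_{t+1}^R\| + \|\varepsilon_t^R\| \to 0$, so $\bar{\alpha}_t^R \to \bar{\alpha}_t^*$ in mean square for every $t$, which is the claim.

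The main obstacle is the conditioning of the sample Gram matrix: the whole argument rests on $G_t$ being positive definite, i.e.\ on the exploration density exciting all $N_t$ basis directions, together with finite second moments so that the law of large numbers and the $O(1/R)$ variance bound apply; without persistency of excitation, $(\Phi_t^R\Phi_t^{R'})^{-1}$ need not converge and the mean-square statement can fail. A secondary subtlety is that $\bar{\alpha}_{t+1}^R$ and $\Phi_t^R$ are built from overlapping samples, so $\bar{e}_{t+1}^R$ and $\varepsilon_t^R$ are statistically dependent; this is harmless for the $L^2$ triangle-inequality step but would require a uniform-integrability refinement for an almost-sure statement. Finally, note that this same recursion $\bar{e}_t^R = \bar{e}_{t+1}^R B_t + \varepsilon_t^R$ is precisely what the later sections exploit: although each $\varepsilon_t^R$ is only $O_p(R^{-1/2})$, it is the accumulated products of the $B_t$'s over the horizon that generate the factorial-exponential variance growth.
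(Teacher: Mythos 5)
Your proposal is correct and follows essentially the same route as the paper's proof: the feature identity $\Phi_{t+1}^R = B_t\Phi_t^R$ from Assumption~\ref{basis}, the collapse of \eqref{RL-LS} onto the true recursion plus a noise term $V_t^R\Phi_t^{R'}(\Phi_t^R\Phi_t^{R'})^{-1}$ that vanishes in mean square via the convergence of the empirical Gram matrix, and backward induction from $T$. The only cosmetic difference is that the paper closes the induction by expanding $E\|\bar{\alpha}_t^R-\bar{\alpha}_t\|^2$ into three terms and killing the cross term by independence, whereas you use the $L^2$ triangle inequality; both are valid.
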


The above result shows that the RL least squares procedure is a randomized approximation of the PE equation. However, the above result follows under an idealized situation when all necessary basis functions are considered, and $R$ becomes very large. Thus, in the following, we characterize the bias and the variance of the estimate, which in turn will allow us to find the sample complexity of the estimates.\\

\begin{corollary} \textit{Bias in RL estimate \eqref{RL-LS}.}
Suppose that the number of basis functions required at time $t$ is $N_t$ and only $N < N_t$ are used. Then, the RL least squares estimate \eqref{RL-LS} is biased for all $\tau < t$. 

\emph{Proof:} See Appendix.
% \textit{arxiv report} \cite{chakravorty2020convergence}.
\end{corollary}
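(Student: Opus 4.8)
\emph{Proof plan.} The strategy is to pass to the large-sample limit of the truncated least squares \eqref{RL-LS}, recognize that limit as an \emph{inexact} Galerkin projection, and then carry the resulting discrepancy backward through the recursion to every $\tau<t$. Throughout I will take $t$ to be the latest time at which fewer basis functions are used than required, so that the representation is exact for all times $>t$ and $\bar\alpha_{t+1}^{R}$ is (asymptotically) the true coefficient vector $\bar\alpha_{t+1}^{*}$; if this is not the case the argument below only compounds the bias. I also note in passing that the matrix inverse in \eqref{RL-LS} already makes the estimate biased at finite $R$; the point of interest, however, is the \emph{non-vanishing} bias due to truncation, which is what the rest of the argument isolates.

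First I would repeat the law-of-large-numbers step behind Proposition~\ref{PE-conv}, but now retaining only the $N<N_t$ basis functions. By the strong law, $\tfrac1R\Phi_t^{R}\Phi_t^{R'}\to G_t:=E_{p_t}[\phi_t\phi_t']$ (the Gram matrix of the retained features under $p_t$) and $\tfrac1R\Phi_{t+1}^{R}\Phi_t^{R'}\to H_t:=E_{p_t}[\phi_{t+1}\phi_t']$; since the retained $\phi^i$ are linearly independent in $L^2(p_t)$, $G_t$ is invertible, so \eqref{RL-LS} gives $\bar\alpha_t^{R}\to\bar\alpha_t^{\infty}:=\bar c_t+\bar\alpha_{t+1}^{*}H_tG_t^{-1}$. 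This $\bar\alpha_t^{\infty}$ is exactly the discrete-time Galerkin projection of the right-hand side of \eqref{pol_eval} onto $\mathrm{span}\{\phi^1,\dots,\phi^N\}$, i.e. the $L^2(p_t)$-orthogonal projection of $c_t+J_{t+1}^{*}\circ f$ onto that subspace — the truncated analog of the Galerkin solution in Proposition~\ref{PE-true} and Remark~1.

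Next I would invoke Proposition~\ref{PE-true} together with Assumption~\ref{basis}: the function $c_t+J_{t+1}^{*}\circ f$ genuinely needs $N_t>N$ basis functions (each $\phi^i\circ f$ expands in up to $N'$ of them), so its $L^2(p_t)$-component orthogonal to $\mathrm{span}\{\phi^1,\dots,\phi^N\}$ is nonzero; hence the projection above is not exact and, after zero-padding $\bar\alpha_t^{\infty}$ to length $N_t$, $e_t:=\bar\alpha_t^{\infty}-\bar\alpha_t^{*}\neq0$. Thus the estimate is already asymptotically biased at time $t$. To reach $\tau<t$, write the limiting truncated recursion $\bar\alpha_\tau^{\infty}=\bar c_\tau+\bar\alpha_{\tau+1}^{\infty}\tilde H_\tau\tilde G_\tau^{-1}$, subtract the true recursion $\bar\alpha_\tau^{*}=\bar c_\tau+\bar\alpha_{\tau+1}^{*}B_\tau$ of Proposition~\ref{PE-true}, and obtain (with the usual zero-padding to align dimensions) $e_\tau=e_{\tau+1}\tilde H_\tau\tilde G_\tau^{-1}+\bigl(\bar\alpha_{\tau+1}^{*}\tilde H_\tau\tilde G_\tau^{-1}-\bar\alpha_{\tau+1}^{*}B_\tau\bigr)$ — a fixed linear image of the incoming bias plus the projection error generated at time $\tau$ itself. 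A backward induction on $\tau$, with $e_t\neq0$ as the base case, then yields $e_\tau\neq0$ for all $\tau<t$, which is the assertion.

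The step I expect to be the real obstacle is closing the induction, i.e. ruling out cancellation: in principle the propagated term $e_{\tau+1}\tilde H_\tau\tilde G_\tau^{-1}$ could exactly cancel $\tau$'s own truncation error, or $e_{\tau+1}$ could lie in the kernel of $\tilde H_\tau\tilde G_\tau^{-1}$. Excluding these coincidences requires either a genericity argument over the problem data $(c,f,p_\cdot,\{\phi^i\})$ or an explicit structural hypothesis — for instance that each $\tilde H_\tau\tilde G_\tau^{-1}$ is injective on the span of the retained coordinates, which holds, e.g., when $f$ is invertible on the support of $p_\tau$ — and I would state whichever condition makes the claim clean and precise. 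The remaining ingredients (the LLN limit, invertibility of $G_t$, and the linear-algebra bookkeeping of the bias recursion) are routine and parallel the proofs of Propositions~\ref{PE-true} and~\ref{PE-conv}.
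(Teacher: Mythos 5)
Your argument is essentially the paper's: the Appendix proof decomposes $\Phi_{t+1}^R\Phi_t^{R'}=B_t\Phi_t^R\Phi_t^{R'}+\tilde B_t\tilde\Phi_t^R\Phi_t^{R'}$ and identifies the cross term involving the omitted basis functions (your projection error) as the source of bias, then observes that even when that term vanishes (orthonormal basis) the uncomputed coefficients $\alpha_t^i$, $i>N$, corrupt the recursion at $t-1$ and beyond --- which is precisely your zero-padded $e_t\neq0$ base case propagated backward by induction. The cancellation issue you flag as the real obstacle is not resolved in the paper either; it is dispatched with the words ``in general,'' so your explicit genericity caveat is, if anything, more careful than the published proof.
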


Next, we consider the variance of the estimate. The key role here is played by the Gram matrix $\mathcal{G}_t = [<\phi_i, \phi_j>_t]$, $i,j = 1 \cdots N_t$, where $<.,.>_t$ denotes the inner product with respect to the sampling distribution $p_t(.)$. In general, the variance of the solution is determined by the variance in the R-sample empirical Gram matrix estimate $\mathcal{G}^R_t = \frac{1}{R} \Phi_t^R \Phi_t^{R'}$. In the following, we characterize the variance of the empirical Gram matrix $\mathcal{G}^R_t$ for suitable choice of basis functions and sampling distribution.\\
% \vspace{1mm}

\begin{assumption}
Let the basis functions used at time $t$ be $\{\phi^1, \cdots \phi^{N_t}\}$. We assume that there exists a constant matrix $H_t$ such that $\begin{bmatrix}
\phi^1\\\vdots \\ \phi^{N_t}
\end{bmatrix} = H_t \begin{bmatrix}
1 \\ \vdots \\ x^{M_t}\end{bmatrix},$ i.e., the basis functions at time $t$ can be represented as $M_{t}$ degree polynomials.
\end{assumption}

Let us define 
% $H_t = [H_1, \cdots H_{M_t}] = \begin{bmatrix} H^1 \\ \vdots \\ H^{N_t} \end{bmatrix},$
\begin{equation}\label{Htrans}
    H_t = [H_1, \cdots H_{M_t}] = \begin{bmatrix}
H^1 \\ \vdots \\ H^{N_t} \end{bmatrix},
\end{equation}
i.e., we define the rows and columns of the matrix $H_t$.
The covariance of the error in the LS estimate is given by $P_t^{R} = \sigma_v^2 (G^{R}_t)^{-1} $, where $G^R_t = R \mg^R_t$, and $\mg^R_t = \frac{1}{R} \Phi^R_t \Phi^{R'}_t$ (see Proof of Proposition \ref{PE-conv}). Now, we can characterize the size of the error in the LS estimate as a function of number of samples $R$ required.\\
% via the following bound on the spectral norm of the matrix $(G^R_t)^{-1}$:
% \textcolor{red}{Raman: we should give this result in terms of the solution covariance $P_t^R$. can you make the change?}
\begin{theorem}\label{Gram_var}
\textbf{Variance of RL least squares estimate.} Let $x_t \sim \mathcal{N}(0, \sigma_X^2),$ i.e. the sampling distribution is zero mean Gaussian with variance $\sigma_X^2$. Let $\beta <1, \delta>0$, and $n< \infty$ be given. Then, to probabilistically bound the norm of the error covariance:
% $Prob(||(G^R_t)^{-1}|| \leq \delta) > 1 - 2e^{-n^2/2}$
\begin{equation}
Prob(||P_t^R|| \leq \delta) > 1 - 2e^{-n^2/2},
\end{equation}
the number of samples required are: 
\begin{equation}
    R > \max[ (\frac{n}{\beta}CC')^2 \sigma_{2M_t}^2, \frac{\sigma_v^2 C}{ \delta(1-\beta)}], 
\end{equation}
where 
\begin{equation}
\sigma_{2M_t}^2 = [(4M_t-1)!! - (2M_t-1)!!^2] \sigma_X^{4M_t}
\end{equation}
and $C$ and $C'$ are constants such that $||\mg_t^{-1}|| \leq C$, and $||H_{M_t}||||H^{N_t}||\leq C'$.
\end{theorem}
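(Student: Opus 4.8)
\emph{Proof proposal.}
The plan is to turn the statement into a matrix-perturbation estimate for the empirical Gram matrix. Since $P_t^R = \sigma_v^2 (G_t^R)^{-1} = \frac{\sigma_v^2}{R}(\mg_t^R)^{-1}$ and $\mg_t^R = \frac{1}{R}\Phi_t^R\Phi_t^{R'}$ is an unbiased estimator of the true Gram matrix $\mg_t$, I would write $\mg_t^R = \mg_t + E_t^R$ with $E_t^R$ the (centered) sampling error, and use the Neumann-series bound: whenever $\|\mg_t^{-1}\|\,\|E_t^R\| \le \beta < 1$, one has $\|(\mg_t^R)^{-1}\| \le \|\mg_t^{-1}\|/(1-\beta) \le C/(1-\beta)$. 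Then $\|P_t^R\| \le \sigma_v^2 C/(R(1-\beta))$, which is $\le \delta$ as soon as $R \ge \sigma_v^2 C/(\delta(1-\beta))$; this is the second argument of the $\max$.

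It remains to control the probability of the event $\{\|\mg_t^{-1}\|\,\|E_t^R\|\le\beta\}$. Using Assumption 3, write $\phi = H_t[1,x,\dots,x^{M_t}]'$, so $\mg_t^R = H_t V_t^R H_t'$, where $V_t^R$ is the empirical moment matrix with $(V_t^R)_{ij} = \frac{1}{R}\sum_{k}(x_t^{(k)})^{i+j}$, and $\mg_t = H_t V_t H_t'$ with $V_t = E[V_t^R]$; hence $E_t^R = H_t(V_t^R - V_t)H_t'$. The entries of $V_t^R - V_t$ are centered empirical-moment errors $\frac{1}{R}\sum_k (x_t^{(k)})^m - E[x^m]$ for $m\le 2M_t$; for zero-mean Gaussian samples the odd moments vanish, $E[x^{2p}] = (2p-1)!!\,\sigma_X^{2p}$, and the per-sample variance of the $m$-th moment, $E[x^{2m}] - (E[x^m])^2$, is maximized at $m = 2M_t$, where it equals $\sigma_{2M_t}^2 = [(4M_t-1)!! - (2M_t-1)!!^2]\sigma_X^{4M_t}$. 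Bounding $\|E_t^R\|$ by the contribution of this dominant ($x^{2M_t}$) entry --- which enters $\mg_t^R$ only through the last column $H_{M_t}$ and the last row $H^{N_t}$ of $H_t$ --- gives $\|E_t^R\| \le C'\,\big|\frac{1}{R}\sum_k (x_t^{(k)})^{2M_t} - E[x^{2M_t}]\big|$ for $C' \ge \|H_{M_t}\|\,\|H^{N_t}\|$, the lower-order moment errors having factorially smaller variance and being absorbed into the constant.

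Finally I would apply a Gaussian-type concentration bound to the scalar $Z_R := \frac{1}{R}\sum_k (x_t^{(k)})^{2M_t} - E[x^{2M_t}]$, which has mean zero and variance $\sigma_{2M_t}^2/R$: the standard tail estimate $1-\Phi(n)\le e^{-n^2/2}$ yields $\mathrm{Prob}\big(|Z_R| > n\sigma_{2M_t}/\sqrt{R}\big) \le 2e^{-n^2/2}$. On the complement, $\|E_t^R\| \le C' n\sigma_{2M_t}/\sqrt{R}$, so $\|\mg_t^{-1}\|\,\|E_t^R\| \le CC' n\sigma_{2M_t}/\sqrt{R}$, which is $\le \beta$ exactly when $R \ge (nCC'/\beta)^2\sigma_{2M_t}^2$ --- the first argument of the $\max$. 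Choosing $R$ above the maximum of the two thresholds makes both the perturbation condition and the size condition hold simultaneously on an event of probability $> 1-2e^{-n^2/2}$, giving $\|P_t^R\|\le\delta$ there.

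The main obstacle is the step that reduces $\|E_t^R\|$ to a constant multiple of the single highest-moment error $Z_R$: one must verify that the $O(M_t^2)$ lower-order empirical-moment errors (which are correlated with $Z_R$) cannot dominate, and carefully track how the coefficient matrix $H_t$ transports the moment-matrix error into the Gram-matrix error --- this is where the constant $C' \ge \|H_{M_t}\|\|H^{N_t}\|$ is pinned down. A secondary point is whether the $2e^{-n^2/2}$ bound is meant asymptotically (CLT for $Z_R$) or finite-sample; in the latter case one would replace the Gaussian tail with a Bernstein/Bennett inequality for the sub-exponential variable $(x_t^{(k)})^{2M_t}$, changing only constants.
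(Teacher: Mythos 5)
Your proposal follows essentially the same route as the paper's proof: the Neumann-series/perturbation bound on $(\mg_t^R)^{-1}$ giving the $\sigma_v^2 C/(\delta(1-\beta))$ threshold, the factorization $E_t^R = H_t(V_t^R - V_t)H_t'$ reducing the Gram-matrix error to the single dominant moment error $\bar\epsilon_{2M_t}^R$ via $C' \ge \|H_{M_t}\|\|H^{N_t}\|$, and the Gaussian tail bound yielding the $(nCC'/\beta)^2\sigma_{2M_t}^2$ threshold. The two caveats you flag at the end (neglecting the lower-order moment errors, and the CLT-based rather than finite-sample tail bound) are precisely the approximations the paper also makes, by assuming the lower-order empirical moments have already converged and invoking the CLT for $\bar\epsilon_{2M_t}^R$.
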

\noindent \emph{Proof:} See Appendix.

% \emph{Proof:} \textit{arxiv report} \cite{chakravorty2020convergence}.

The above result establishes the number of samples required to get an accurate RL least squares estimate.
Next, we shall see the implications of the above result for particular choices of basis functions.

\subsection{Sample Complexity}
\paragraph{Monomial Basis}
For a monomial basis, $C' = 1$ since $H$ is the identity matrix and $N_t = M_t$. Thus, the number of samples has to satisfy:
\begin{equation}\label{C_monomial}
    R\sim O([(4N_t-1)!! - (2N_t-1)!!^2] \sigma_X^{4N_t}),
\end{equation}
for the LS error to be small enough.

\paragraph{Hermite (Orthonormal) Basis} It is well known that the Hermite polynomials form the orthonormal basis for Gaussian sampling distributions \cite{Courant-Hilbert}. Noting that $M_t = N_t$, in the case of Hermite polynomials, owing to their orthonormality, one can show that $C' = \frac{1}{N_t!^2 \sigma_X^{4N_t}}$, and this, in turn, implies that for the Hermite basis, the number of samples $R$ need to satisfy:
\begin{equation}\label{C_Hermite}
    R \sim O\left(\frac{(4N_t-1)!! - (2N_t-1)!!^2}{N_t!^2}\right),
\end{equation}
for the LS error to be small enough.

\paragraph{Nonlinear Basis} In this case, we mean that the cost function $J_t(x) = h(x, \theta_t)$, where $h(x, \theta)$ is a suitable nonlinear approximation architecture, such as a (deep) neural net, parametrized nonlinearly by the (vector) parameter $\theta$. The PE equation in this case becomes: 
\begin{equation}
    h(x, \theta_t) = c(x) + h(f(x), \theta_{t+1}),
\end{equation}
where $\theta_t$ parametrizes the cost function at time $t$, and the same holds for $\theta_{t+1}$. It is reasonable to assume, given the change in the parameter $\theta$ between consecutive steps is small enough, that:
\begin{equation}
    h(x,\theta_t) \approx h(x, \theta_{t+1}) + \sum_{i=1}^{N_t} H_{t+1}^i(x) \delta \theta_i,
\end{equation}
where $H_{t+1}^i(x) = \frac{\partial h(x, \theta)}{\partial \theta_i}|_{\theta_{t+1}}$, where $\theta_i$ are the components of the vector parameter $\theta$. Rewriting the policy evaluation equation, one obtains:
\begin{multline}
    H_{t+1}^1(x) \delta \theta_1 + \cdots + H_{t+1}^{N_t} (x) \delta \theta_{N_t} \\
    = c(x) + \underbrace{[h(f(x), \theta_{t+1}] - h(x, \theta_{t+1})]}_{\delta h_{t+1}(x)} + v,
\end{multline}
where $v$ is a noise term,
which may be written in matrix form for $R$ samples as:
\begin{multline*}
    [\delta \theta_1 \cdots \delta \theta_{N_t}] \underbrace{\begin{bmatrix}
    H_{t+1}^1(x^{(1)}_t) & \cdots & H_{t+1}^1(x^{(R)_t})\\
    \vdots & \vdots & \vdots\\
    H_{t+1}^{N_t}(x^{(1)}_t) & \cdots & H_{t+1}^{N_t}(x^{(R)}_t)
    \end{bmatrix}}_{\mathcal{H}^R_t} \\
    = \underbrace{\begin{bmatrix}
    c(x^{(1)}) ~ \cdots ~ c(x^{(R)})
    \end{bmatrix}}_{\mathcal{C}^R_t} + \underbrace{\begin{bmatrix}
    \delta h_{t+1}(x^{(1)}) ~ \cdots ~
    \delta h_{t+1}(x^{(R)})
    \end{bmatrix}}_{\delta \mathcal{H}^R_t} \\+ \underbrace{\begin{bmatrix}
    v^{(1)}_t & \cdots &
    v^{(R)}_t
    \end{bmatrix}}_{V^R_t},
\end{multline*}
where we assume that the noise terms $v^{(i)}_t$ are i.i.d with variance $\sigma_v^2$  as before.
The least squares solution to the above equation is, as usual, given by:
\begin{equation}\label{NL-RL}
    \delta \theta^R_t = \mathcal{C}^R_t \mathcal{H}^{R'}_t (\mathcal{H}^R_t\mathcal{H}^{R'}_t)^{-1} + \delta H^R_t \mathcal{H}^{R'}_t(\mathcal{H}^R_t\mathcal{H}^{R'}_t)^{-1}.
\end{equation}
Denote the empirical Gram matrix for the instantaneous basis functions, $H_{t+1}^i(x)$, at time $t+1$ as:
\begin{equation}
    \mg_{t}^R = \frac{1}{R}\mathcal{H}^R_t \mathcal{H}^{R'},
\end{equation}
and thus, the covariance of the error in the LS solution is given by $\frac{\sigma_v^2}{R} (\mg_t^R)^{-1}$. Note that this situation is no different from the linear case considered previously, in that we are approximating the change in the cost function via the change in the parameter $\theta$ at time $t+1$, and the instantaneous basis functions $H_{t+1}^i(x)$. The primary difference with the linear case is that our basis functions $H_{t+1}^i(x)$ change with time unlike the fixed basis in the linear case. Suppose that we have:
% \begin{equation*}
$
    \begin{bmatrix}
    H_{t+1}^1(x)\\
    \vdots\\
    H_{t+1}^{N_t}(x)
    \end{bmatrix}
    = \mathcal{D}_{t+1}\begin{bmatrix}
    1\\x\\ \vdots \\ x^{M_t}
    \end{bmatrix},
$
% \end{equation*}
then we are back to the conditions enumerated in Theorem \ref{Gram_var}. Therefore, it follows that the number of samples $R$ required such that we get a small enough error in the solution should be:
\begin{equation}\label{C_nonlinear}
    R\sim O([(4M_t-1)!! - (2M_t-1)!!^2] \sigma_X^{4M_t}).
\end{equation}

% \subsection{Discussion.} 
\textbf{Discussion.} 
It can be seen clearly from above that choosing orthonormal (o.n.) polynomials to the sampling distribution greatly reduces the variance of the RL least squares estimate \eqref{RL-LS}. Further, the variance of the estimate if we use unnormalized polynomials is very high. The situation is no different even when using a nonlinear basis. Typically in RL, one uses rollouts and the basis functions are almost never chosen to satisfy orthonormality. Further, in general, since the rollout sampling distributions $p_t(\cdot)$ need not be Gaussian, finding such o.n. basis functions is a challenge in itself since one does not have an idea about these sampling distributions in advance, or they are never known explicitly. However, in our opinion, the sampling distributions can, and should, be chosen at our convenience and need not arise from rollouts if viewed in the context of the Galerkin interpretation of Proposition \ref{PE-true}. In fact, the above analysis suggests that if we are to reduce the variance of the estimates, the sampling distributions should not be chosen from rollouts.

A significantly more intractable problem arises due to the exponential growth of the basis functions required at every time step. Suppose that the dynamics could be represented by a $k$ degree polynomial, and the terminal cost function was degree $N_T$, then the number of basis functions required at time $t$ is $k^{T-t}N_T$. When coupled with the variance estimate above, one can see that, even with o.n. basis, this leads to an explosive variance growth: a factorial-exponential rate of growth.
% \vspace{-1mm}
\section{A Perturbation Structure for PE}
% \vspace{-1mm}
% Policy Evaluation
In this section, we present a construct that ideally allows a perturbation structure to the Policy Evaluation equations, i.e., a structure where the higher order terms do not affect the evolution of the lower order terms which implies that we can close our computations at any desired order without incurring an error.

\begin{figure}[h!]
    % \vspace{-2mm}
    \centering
    \includegraphics[width= 8 cm,height= 9 cm,keepaspectratio]{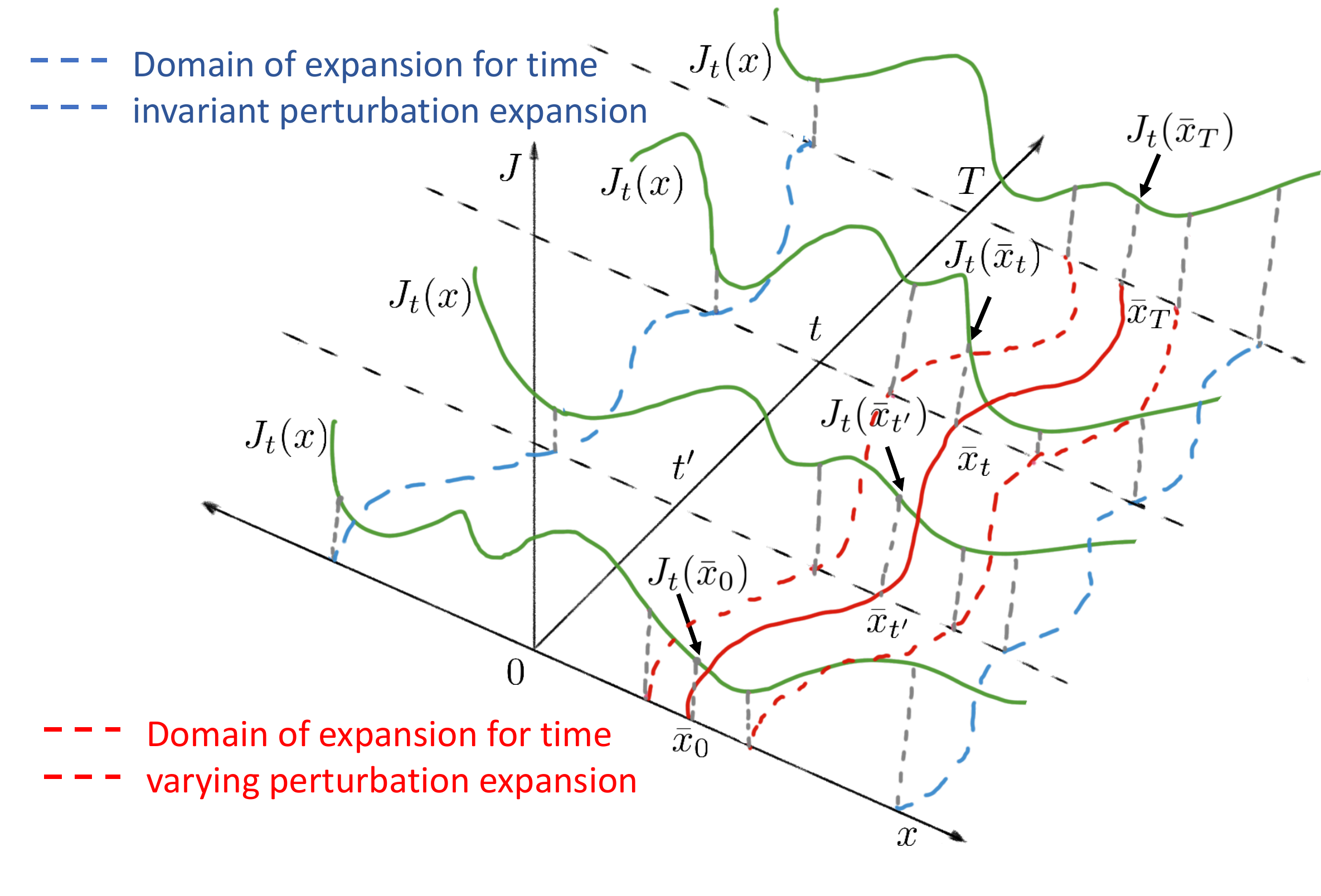}   
    \vspace{-3mm}
    \caption{Time-varying versus time-invariant perturbation expansion.}
    \label{N_terms}
\end{figure}
    % \vspace{-3mm}
Recall the Policy evaluation equation \eqref{pol_eval}. Now, consider that we are given a nominal trajectory under the dynamics, say $\bar{x}_t = f(\bar{x}_{t-1})$, $t=0, 1,\cdots T$ given some initial condition $\bar{x}_0 = x_0$. Next, expand the dynamics about this nominal trajectory as: 
\begin{equation}
f(x_t) = f(\bar{x}_t + \delta x_t)= f(\bar{x}_t) + F^1_t \delta x_t + F^2_t\delta x_t^2 +\cdots, 
\end{equation}
where $F_t^i$ denotes the $i^{th}$ term in the expansion of the dynamics around the nominal trajectory. Similarly:
\begin{multline}
    J_{t+1}(f(x_t)) = J_{t+1}(f(\bar{x}_t + \delta x_t)) \\ = J_{t+1}(f(\bar{x}_t)) + K_{t+1}^1 (F_t^1\delta x_t + \frac{1}{2} F_t^2 \delta x_t^2+ \cdots) \\ + \frac{1}{2}K_{t+1}^2 (F_t^1\delta x_t + F_t^2 \delta x_t^2+ \cdots)^2+ \cdots,
\end{multline}
where $K_{t+1}^i$ denotes the $i^{th}$ term in the Taylor expansion of $J_{t+1}(f(x_t))$ around the nominal. Similarly the incremental cost function $c_t(x_t) = c_t(\bar{x}_t) + C_t^1 \delta x_t + \frac{1}{2}C_t^2 \delta x_t^2 + \cdots$, and the optimal cost function at time $t$, $J_t(x_t) = J_t(\bar{x}_t) + K_t^1 \delta x_t + \frac{1}{2} K_t^2 \delta x_t^2+ \cdots$. Substituting the above expressions into the policy evaluation equation \eqref{pol_eval}, we obtain: 
% \vspace{-3mm}
\begin{multline}
J_t(x_t)  = \bar{J}_t + K_t^1 \delta x_t + \frac{1}{2}K_t^2 \delta x_t^2 +\cdots \\
  = (\bar{c}_t + C_t^1 \delta x_t + \frac{1}{2} C_t^2 \delta x_t^2+ \cdots)+ \bar{J}_{t+1} \\ + K_{t+1}^1 (F_t^1\delta x_t + \frac{1}{2} F_t^2 \delta x_t^2+ \cdots) \\+ 
  \frac{1}{2}K_{t+1}^2 (F_t^1\delta x_t + F_t^2 \delta x_t^2+ \cdots)^2 + \cdots,
\end{multline}
% \hspace{-2mm}
where $\bar{J}_t = J(\bar{x}_t)$, $c(\bar{x}_t) = \bar{c}_t$ and $\bar{J}_{t+1} = J_{t+1}(\bar{x}_{t+1}) = J_{t+1}(f(\bar{x}_t))$. Now, grouping the different terms of $\delta x_t^i$ on both sides of the equation allows for writing the following vector-matrix form equation as:
% above gives us: the following evolution equations for the Taylor coefficients $K_t^i$ of the optimal cost function $J_t(x_t)$:\\
% % \begin{align}
% $    ~~~~~~~~~~~~~~ \bar{J}_t = \bar{c}_t + \bar{J}_{t+1}, ~
%     K_t^1 = C_t^1 + K_{t+1}^1 F_t^1, ~
%     K_t^2 = C_t^2 + K_{t+1}^1 F_t^2 + K_{t+1}^2 (F_t^1)^2,~ \cdots  $
% % \end{align}
% and so on. The equations are swept back in time and the terminal conditions for these equations are given by $K_t^i = \frac{\partial^i g}{\partial x^i}|_{\bar{x}_T}$.
% The above equations can be written in vector-matrix form as:
\begin{multline}\label{PPE}
    [K_t^1 \, K_t^2  \cdots] = [C_t^1 \,C_t^2 \cdots]\\
    + [K_{t+1}^1 \,K_{t+1}^2,\cdots]\underbrace{\begin{bmatrix}
    F_t^1 & F_t^2 & \cdots \\
    0 & (F_t^1)^2 &  \cdots \\
    0 & 0 & \cdots\\
    \vdots & \vdots & \vdots \\
    0 & 0 & \cdots
    \end{bmatrix}}_{\mathcal{B}_t},
\end{multline}
note the equations have a beautiful perturbation/ upper triangular structure. %Note the beautiful perturbation structure of the equations above: the calculation of $K_t^1$ doesn't require knowledge of $K_t^2$ and higher order terms, the calculation of $K_t^2$ does not require knowledge of $K_t^3$ and higher order terms, and so on. 
Practically, this means that we can close our computations at any order we desire without worrying about the effect of the higher order terms on the lower order terms, given we have knowledge of the dynamics, and hence, the Taylor coefficients $F_t^i$. 

A special case of the above equation is when $f(0) = 0$, and the nominal trajectory is simply $\bar{x}_t = 0$. In such a case the expansion is about a nominal trajectory that stays at the origin. In such a case, the Taylor coefficients, rather than being time varying, will be time invariant, i.e., $F^1, F^2,\cdots$ etc. Typically, we are given problems where the initial state $x_0 \neq 0$, and can be far from the origin. In such a case, we may see that the number of terms required for a static expansion, i.e., about $\bar{x}_t = 0$, will require far more terms than would an expansion that was centered on a nominal trajectory starting at $x_0$. The situation is illustrated in Fig. \ref{N_terms}. Thus, it is much more efficient to seek the time varying expansion above. In particular, we shall explore the implication further when we solve the perturbed policy evaluation (PPE) equation \eqref{PPE}.\\

\begin{remark}
Note that given the Taylor coefficients $F_t^i$, the second and higher rows of the ``dynamics" matrix encoding the structure of the PPE equation are perfectly known. This knowledge can be used to solve the PPE equation in a ``model based" fashion, as opposed to a model-free approach, where this structure is actually teased out of the data from the system.  
\end{remark}

\subsection{RL type solution to the PPE}
% \vspace{-1mm}
First, we show a model-based solution to the PPE equation \eqref{PPE}, i.e., one where we explicitly estimate the first $M$ Taylor coefficients of the dynamics $\mf_t \equiv [F_t^1,F_t^2,\cdots F_t^M]$ which are then substituted into \eqref{PPE} to solve the PE equation. Next, we show how this can be extended to the model-free case: one where we do not solve for $\mf_t$, and instead directly solve \eqref{PPE} and infer the matrix $\mathcal{B}_t$ from the system data.

We can write the following approximation, after neglecting the higher order terms beyond $\delta x_t^M$:\\ $\delta x_{t+1}^{(i)} \approx \mf_t \begin{bmatrix}
\delta x_t^{(i)}\\
(\delta x_t^{(i)})^2\\
\vdots\\
(\delta x_t^{(i)})^M
\end{bmatrix} + v_t^{(i)}$, where as before $v_t^{(i)}$ is an i.i.d. noise sequence and $i=1,2\cdots R$. A least squares estimate of $\mathcal{F}_t$ is quite straightforward and may be written as:
\begin{equation}\label{MB-LS}
    \mf_t^R = \delta X_{t+1}^R \delta \chi_t^{R'}(\delta \chi_t^{R} \delta \chi_t^{R'})^{-1},
\end{equation}
where $\delta \chi_t^R = \begin{bmatrix}
\delta x_t^{(1)} & \cdots & \delta x_t^{(R)}\\
\vdots & \vdots &\vdots\\
 (\delta x_t^{(1)})^M & \cdots & (\delta x_t^{(R)})^M
\end{bmatrix}$, and $\delta X_{t+1}^R = \begin{bmatrix}
\delta x_{t+1}^{(1)} & \cdots & \delta x_{t+1}^{(R)}
\end{bmatrix}$, where $\delta x_{t}^{(i)} = x_t^{(i)} - \bar{x}_t $, and $\delta x_{t+1}^{(i)} = f(x_t^{(i)}) - \bar{x}_{t+1}$, where $\bar{x}_{t+1} = f(\bar{x}_t)$. Further, we assume that $\delta x_t^{(i)} \sim \mathcal{N}(0,\sigma_X^2)$. Thus, the data is obtained by perturbing the system from the nominal trajectory.

The following development characterizes the error in the LS solution \eqref{MB-LS} incurred from neglecting the higher order terms of the dynamics (beyond $\delta x_t^M$) and shows that it can be made arbitrarily small by choosing the perturbation $\delta x_t$ to be suitably small. In the following, unlike in Section III, the Gram matrix $\mg$ size will not change, since we are looking at order $M$ approximation throughout time. \\

\begin{lemma}
Let $\Delta_t^R = [\Delta_t^{R,1}, \Delta_t^{R,2} \cdots \Delta_t^{R,M}]$, where $\Delta_t^{R,l} = \frac{1}{R}\sum_{k>M} \sum_{i=1}^R F_t^k (\delta x_t^{(i)})^k(\delta x_t^{(i)})^l$. Let the empirical Gram matrix $\mg^R = [\mg^R_{ij}]$, where $i,j=1,2,\cdots M$, and $\mg^R_{ij} = \frac{1}{R}\sum_{k=1}^R (\delta x_t^{(k)})^i (\delta x_t^{(k)})^j $. Then, 
\begin{equation}
    \mf_t^R = \mf_t  + \Delta_t^R {(\mg^R)^{-1}} + \frac{1}{R}V_t^R \delta \chi_t^{R'}(\mg^R)^{-1}.
\end{equation}
% \emph{(Proof: Supplementary Materials)}
\end{lemma}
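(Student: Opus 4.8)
The plan is to substitute the exact input--output relation of the perturbed dynamics into the least squares formula \eqref{MB-LS} and simplify; the statement is essentially an algebraic identity once the bookkeeping is in place. First I would write, for each sample $i$, the Taylor expansion of the dynamics about the nominal trajectory and record the observed next-state perturbation as that series plus the i.i.d. noise,
\begin{equation*}
\delta x_{t+1}^{(i)} = \sum_{k\geq 1} F_t^k (\delta x_t^{(i)})^k + v_t^{(i)} = \sum_{k=1}^{M} F_t^k (\delta x_t^{(i)})^k + \sum_{k>M} F_t^k (\delta x_t^{(i)})^k + v_t^{(i)},
\end{equation*}
i.e. I separate the part captured by the order-$M$ model $\mf_t = [F_t^1,\cdots,F_t^M]$ from the neglected tail. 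Stacking the $R$ samples columnwise this reads $\delta X_{t+1}^R = \mf_t\,\delta\chi_t^R + E_t^R + V_t^R$, where $E_t^R$ is the $1\times R$ row vector with $i$-th entry $\sum_{k>M} F_t^k (\delta x_t^{(i)})^k$ and $V_t^R = [v_t^{(1)}\cdots v_t^{(R)}]$.

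Next I would plug this into \eqref{MB-LS}. Since $\delta\chi_t^R\,\delta\chi_t^{R'}(\delta\chi_t^R\delta\chi_t^{R'})^{-1}$ is the identity, the term $\mf_t\,\delta\chi_t^R$ passes through the pseudo-inverse untouched and one obtains
\begin{equation*}
\mf_t^R = \mf_t + E_t^R\,\delta\chi_t^{R'}(\delta\chi_t^R\delta\chi_t^{R'})^{-1} + V_t^R\,\delta\chi_t^{R'}(\delta\chi_t^R\delta\chi_t^{R'})^{-1}.
\end{equation*}
It then remains to recognize the two error terms. By definition of $\mg^R$ we have $\delta\chi_t^R\,\delta\chi_t^{R'} = R\,\mg^R$, hence $(\delta\chi_t^R\delta\chi_t^{R'})^{-1} = \frac{1}{R}(\mg^R)^{-1}$. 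For the truncation term, the $l$-th entry of $E_t^R\,\delta\chi_t^{R'}$ is $\sum_{i=1}^R\big(\sum_{k>M}F_t^k(\delta x_t^{(i)})^k\big)(\delta x_t^{(i)})^l = R\,\Delta_t^{R,l}$, so $E_t^R\,\delta\chi_t^{R'} = R\,\Delta_t^R$; the factors of $R$ cancel and we obtain $\mf_t^R = \mf_t + \Delta_t^R(\mg^R)^{-1} + \frac{1}{R}V_t^R\,\delta\chi_t^{R'}(\mg^R)^{-1}$, which is the claim.

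There is no real analytic difficulty here --- the argument is a one-line substitution plus the two definitional identifications above, and I would present it as such. The only points worth spelling out carefully are: (i) the meaning of the infinite tail $\sum_{k>M}$, which is legitimate either because $f$ is taken real-analytic on the neighbourhood swept by the perturbations $\{\delta x_t^{(i)}\}$ or, as in the polynomial-dynamics setting used elsewhere in the paper, because the sum is in fact finite; and (ii) that $\delta\chi_t^R\,\delta\chi_t^{R'}$ is invertible (with probability one for $R\geq M$ under the continuous sampling distribution on $\delta x_t$), so that the least squares solution is well defined. I would also emphasize that the decomposition itself uses nothing about the distribution of $\delta x_t^{(i)}$; the Gaussian assumption enters only afterwards, to bound $\|\Delta_t^R\|$ and $\|(\mg^R)^{-1}\|$ and thereby convert this exact identity into the quantitative error estimate that the result is used for.
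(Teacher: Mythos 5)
Your proposal is correct and follows essentially the same route as the paper's own proof: decompose $\delta x_{t+1}^{(i)}$ into the order-$M$ model, the $k>M$ tail, and the noise, then use the identities $\delta\chi_t^R\delta\chi_t^{R'}=R\,\mg^R$ and $E_t^R\delta\chi_t^{R'}=R\,\Delta_t^R$ inside the least squares formula. Your added remarks on convergence of the tail and invertibility of the empirical Gram matrix are reasonable housekeeping that the paper leaves implicit, but they do not change the argument.
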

\noindent \emph{Proof:} See Appendix. 

% \textit{arxiv report} \cite{chakravorty2020convergence}.
% \begin{proof}
% It is clear that $\delta \chi_t^R \delta \chi_t^{R'} = R \mg^R$.
% Next, $\delta x_{t+1}^{(i)} = \sum_k F_t^k (\delta x_t^{(i)})^k +  v_t^{(i)}$, and $\delta X_{t+1}^R\delta\chi_t^{R'} = [d^1, \cdots d^M]$, where $d^l \equiv \sum_{k=1}^M \sum_{i=1}^R F_t^k (\delta x_t^{(i)})^k (\delta x_t^{(i)})^l + \sum_{k>M} \sum_{i=1}^R F_t^k (\delta x_t^{(i)})^k (\delta x_t^{(i)})^l$. 

% However, $\sum_{k>M} \sum_{i=1}^R F_t^k (\delta x_t^{(i)})^k (\delta x_t^{(i)})^l = R \Delta_t^{R,l}$. Thus, using the above identities, it follows that: $\delta X_{t+1}^R\delta \chi_t^{R'} = \mf_t (R\mg^R) + R \Delta_t^R + V_t^R \delta \chi_t^{R'}$, and substituting this into \eqref{MB-LS}, the result follows. 
% \end{proof}
The above result makes it clear that our estimates are biased for any finite $R$ (in fact, even for the limit), but if $\Delta^R_t$ is small enough, then this bias can be made small. In the following, we show precisely such a result. \\

\begin{proposition}\label{LS-e}
Let $\Delta_t = \lim_R \Delta_t^R$ and let $\mg = \lim_R \mg^R$. Given any $M$, any $\epsilon > 0$, there exists a variance $\sigma_X^2 < \infty$ such that $|\Delta_t^j| \leq \epsilon |\tilde{f}_t^j|$, where $\mf_t\mg = [\tilde{f}_t^1, \cdots, \tilde{f}_t^j, \cdots \tilde{f}_t^M]$. 
% \emph{(Proof: Supplementary Materials)}
\end{proposition}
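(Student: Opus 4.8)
The plan is to analyze the limiting quantities $\Delta_t = \lim_R \Delta_t^R$ and $\mg = \lim_R \mg^R$ directly, exploiting that both are explicit moment integrals against the Gaussian sampling density $\delta x_t \sim \mathcal{N}(0,\sigma_X^2)$, and then show that each component $|\Delta_t^j|$ is dominated by $\epsilon |\tilde f_t^j|$ once $\sigma_X^2$ is taken small enough. First I would write out the limits: by the law of large numbers, $\mg_{ij} = E[(\delta x_t)^{i+j}] = \mu_{i+j}$, the $(i+j)$-th moment of $\mathcal{N}(0,\sigma_X^2)$, which is $0$ when $i+j$ is odd and a constant times $\sigma_X^{i+j}$ when $i+j$ is even; similarly $\Delta_t^{l} = \sum_{k>M} F_t^k\, E[(\delta x_t)^{k+l}] = \sum_{k>M} F_t^k \mu_{k+l}$. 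The key observation is the scaling in $\sigma_X$: $\mg_{ij} \sim \sigma_X^{i+j}$ (when nonzero), so $\mf_t \mg$ has $j$-th component $\tilde f_t^j = \sum_{i=1}^M F_t^i \mu_{i+j}$, whose leading behavior as $\sigma_X \to 0$ is governed by the smallest $i+j$ with $i \le M$ giving a nonzero moment — of order $\sigma_X^{\Theta(j)}$ — whereas $\Delta_t^j = \sum_{k>M} F_t^k \mu_{k+j}$ is a sum over $k \ge M+1$, hence of order at least $\sigma_X^{M+1+j}$ (times a convergent series in the higher Taylor coefficients, assuming the dynamics $f$ is analytic with a positive radius of convergence, so the tail $\sum_{k>M}|F_t^k|\sigma_X^{k}$ converges and is itself $o(\sigma_X^M)$).

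Next I would make the comparison precise. Fix $M$ and $j \in \{1,\dots,M\}$. Writing $\tilde f_t^j = \sum_{i=1}^M F_t^i \mu_{i+j}$ and $\Delta_t^j = \sum_{k>M} F_t^k \mu_{k+j}$, factor out the lowest surviving power of $\sigma_X$ from each. In $\tilde f_t^j$ the dominant term corresponds to some index $i^*(j) \le M$ with $i^*(j)+j$ even (and $F_t^{i^*}\neq 0$; the nondegenerate case), giving $\tilde f_t^j = c_j \sigma_X^{i^*(j)+j}(1 + O(\sigma_X^2))$ with $c_j \neq 0$. In $\Delta_t^j$ every term carries $\sigma_X^{k+j}$ with $k \ge M+1 \ge i^*(j)+1$, so $\Delta_t^j = O(\sigma_X^{M+1+j})$. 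Hence
\begin{equation}
\frac{|\Delta_t^j|}{|\tilde f_t^j|} = O\!\left(\sigma_X^{\,M+1+j - i^*(j) - j}\right) = O\!\left(\sigma_X^{\,M+1-i^*(j)}\right) \longrightarrow 0 \quad \text{as } \sigma_X \to 0,
\end{equation}
since $i^*(j) \le M$ implies the exponent $M+1-i^*(j) \ge 1 > 0$. Therefore, given $\epsilon > 0$, choosing $\sigma_X^2$ small enough — uniformly over the finitely many $j \in \{1,\dots,M\}$ by taking the minimum of the finitely many thresholds — yields $|\Delta_t^j| \le \epsilon |\tilde f_t^j|$ for all $j$, which is the claim.

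The main obstacle, and the place where the argument needs care rather than routine computation, is controlling the infinite tail $\sum_{k>M} F_t^k \mu_{k+j}$: one must justify that the Taylor series of $f$ about the nominal trajectory has a positive radius of convergence and that, for $\sigma_X$ small, the Gaussian moments $\mu_{k+j} = (k+j-1)!!\,\sigma_X^{k+j}$ do not overwhelm the decay of $F_t^k$. Because the double-factorial grows like $((k+j)/e)^{(k+j)/2}$, i.e. faster than geometrically, this requires either an assumption that $f$ is entire (or that its Taylor coefficients decay super-geometrically), or a truncation/remainder-form argument bounding the neglected dynamics terms by $\sup|f^{(>M)}|$ on a bounded neighborhood together with a Gaussian tail estimate — essentially the same device already used to justify the order-$M$ truncation leading to \eqref{MB-LS} and in Theorem \ref{Gram_var}. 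Modulo that analyticity/decay hypothesis on the dynamics (implicitly in force throughout Section IV), the rest is the elementary $\sigma_X$-power bookkeeping sketched above.
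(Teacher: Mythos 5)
Your proposal is correct and follows essentially the same route as the paper: both decompose $\tilde f_t^j=\sum_{k\le M}F_t^k E[\delta x_t^{j+k}]$ and $\Delta_t^j=\sum_{k>M}F_t^k E[\delta x_t^{j+k}]$, view them as power series in $\sigma_X$, show the tail is dominated by the head for $\sigma_X$ small, and finish by taking the minimum threshold over the finitely many $j$ (the paper packages the tail-versus-head step as an auxiliary lemma on convergent Taylor series, whereas you do explicit leading-exponent bookkeeping). Your added remarks --- that a nonvanishing leading term with $i^*(j)\le M$ is needed, and that convergence of $\sum_{k>M}F_t^k\mu_{k+j}$ requires super-geometric decay of the $F_t^k$ against the factorially growing Gaussian moments --- correctly identify hypotheses the paper's lemma simply assumes by positing convergence.
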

\noindent \emph{Proof:} See Appendix. \\

% \noindent \emph{Proof:} \textit{arxiv report} \cite{chakravorty2020convergence}.
% \begin{proof}
% Recall that $\mf_t\mg + \Delta_t = [f_t^j]$ where $f_t^j = E[\delta f \delta x_t^j]$, and $\delta f = \sum_{k>0} F_t^k \delta x_t^k$. Thus, it follows that $\tilde{f}_t^j = \sum_{k=1}^M F_t^k E[\delta x_t^{j+k}]$, and $\Delta_t^j = \sum_{k>M} F_t^k E[\delta x_t^{j+k}]$.\\
% First, suppose that $j$ is odd, then: $\tilde{f}_t^j = \sum_{k=1}^M F_t^k C_{j+k} \sigma_X^{j+k}$ where $E[\delta x_t^p] = C_p \sigma_x^p$, for some constant $C_p$. And similarly $\Delta_t^j = \sum_{k>M} F_t^k C_{j+k} \sigma_X^{j+k}$. Thus, $f_t^j (\sigma_X) = E[\delta f \delta x_t^j] = \sigma_X^{j} \sum_k F_t^k C_{j+k} \sigma_X^k$. Note that if the input distribution is zero mean Gaussian, then we only have even powers of $\sigma_X$. The above is simply a Taylor series expansion of the function $f_t^j(\sigma_X^2)$. Thus, 
% % using Lemma \ref{LS-e.1}, 
% it can be shown that there exists $(\sigma_X^j)^2 < \infty$ such that $|\Delta_t^j (\sigma^2)| \leq \epsilon |\tilde{f}_t^j (\sigma^2)| $, for all $\sigma^2 \leq (\sigma_X^j)^2$. A similar argument holds for even $j$.\\
% Next, choose $\sigma_X^2 = \min_j (\sigma_X^j)^2$. Then, using the above result, it follows that:
% $|\Delta_t^j| \leq \epsilon|\tilde{f}_t^j|$, for all $j = 1, ..M$, thereby proving the result.
% \end{proof}

Next, we have the following consequence.
\begin{corollary}
The least squares estimate \eqref{MB-LS}, $\mf_t^R \rightarrow \mf_t + \Delta_t \mg^{-1}$ as $R \rightarrow \infty$ in mean square sense. 
% \noindent \emph{Proof:} \textit{arxiv report} \cite{chakravorty2020convergence}.
\end{corollary}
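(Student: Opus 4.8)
The plan is to pass to the limit $R\to\infty$ in the exact identity supplied by the preceding Lemma,
\[
\mf_t^R = \mf_t + \Delta_t^R (\mg^R)^{-1} + \tfrac{1}{R} V_t^R \delta\chi_t^{R'} (\mg^R)^{-1},
\]
and to argue that each of the three summands on the right converges in the mean square sense: the first is deterministic and already equals $\mf_t$; the second converges to $\Delta_t\mg^{-1}$; and the third converges to $0$. Adding the three limits gives the claim.

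First I would establish the convergence of the empirical Gram matrix and its inverse. Each entry $\mg^R_{ij}=\frac1R\sum_{k=1}^R(\delta x_t^{(k)})^{i+j}$ is an empirical average of i.i.d.\ terms, and since $\delta x_t^{(k)}\sim\mathcal N(0,\sigma_X^2)$ has finite moments of every order, the law of large numbers gives $\mg^R_{ij}\to\mg_{ij}$ both almost surely and in $L^2$ (finiteness of the $2(i+j)$-th Gaussian moment controls the $L^2$ rate, $O(1/R)$). The limit $\mg$ is the Gram matrix of the linearly independent monomials $x,x^2,\dots,x^M$ against the nondegenerate Gaussian measure, hence positive definite and invertible; by the continuous mapping theorem $(\mg^R)^{-1}\to\mg^{-1}$ almost surely, and this can be upgraded to $L^2$ because $\|(\mg^R)^{-1}\|$ is bounded except on an event of exponentially small probability (the concentration bound already used in the proof of Theorem~\ref{Gram_var}), which makes the family $\{\|(\mg^R)^{-1}\|^2\}$ uniformly integrable.

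Next, the bias term: $\Delta_t^{R,l}=\frac1R\sum_{i=1}^R\big(\sum_{k>M}F_t^k(\delta x_t^{(i)})^k\big)(\delta x_t^{(i)})^l$ is again an empirical average of i.i.d.\ variables with common mean $\Delta_t^l$, so $\Delta_t^R\to\Delta_t$ in mean square provided the Taylor tail $\sum_{k>M}F_t^k(\delta x_t)^k$ is square-integrable; as in Section~IV this holds once $\sigma_X^2$ is chosen small enough (and then $\Delta_t$ is itself controllably small by Proposition~\ref{LS-e}). Since the product of an $L^2$-convergent sequence of matrices with a bounded, $L^2$-convergent sequence is $L^2$-convergent, $\Delta_t^R(\mg^R)^{-1}\to\Delta_t\mg^{-1}$. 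Finally, the noise term $\frac1R V_t^R\delta\chi_t^{R'}(\mg^R)^{-1}$ has zero mean and error covariance $\frac{\sigma_v^2}{R}(\mg^R)^{-1}$, which vanishes as $R\to\infty$ since $(\mg^R)^{-1}$ stays bounded; hence it converges to $0$ in mean square. Combining the three limits yields $\mf_t^R\to\mf_t+\Delta_t\mg^{-1}$ in the mean square sense.

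The main obstacle is not the law-of-large-numbers bookkeeping but the two places where heavy tails intrude: upgrading $(\mg^R)^{-1}\to\mg^{-1}$ from almost-sure to mean-square convergence (matrix inversion is unbounded near singular matrices, so one must show the near-singular event contributes negligibly to the $L^2$ norm — exactly the role of the concentration estimate behind Theorem~\ref{Gram_var}), and interchanging the limit with the infinite Taylor sum inside $\Delta_t^R$, for which the smallness of $\sigma_X^2$ from Proposition~\ref{LS-e} is what makes $\sum_{k>M}F_t^k(\delta x_t)^k$ genuinely square-integrable.
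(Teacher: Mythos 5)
Your proof follows essentially the same route as the paper's: both start from the exact decomposition of Lemma 1, show that $\Delta_t^R(\mg^R)^{-1}\rightarrow\Delta_t\mg^{-1}$ in mean square, and kill the noise term by computing its conditional covariance $\frac{\sigma_v^2}{R}(\mg^R)^{-1}$ and letting $R\rightarrow\infty$. If anything you are slightly more careful than the paper, which asserts the mean-square convergence of $\mg^R$, $(\mg^R)^{-1}$ and $\Delta_t^R$ without the law-of-large-numbers and uniform-integrability justification you supply.
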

\noindent \emph{Proof:} See Appendix. \\

% \vspace{-3mm}
\textbf{\textit{Accuracy of the Solution.}} To understand the result above, let us rewrite the limiting solution as $\mf_t^R = (\mf_t\mg + \Delta_t) \mg^{-1}$. Thus, the limiting solution can be understood as the solution to the linear equation $\mf_t\mg = \tilde{\mf}_t +  \Delta_t$ where $\tilde{\mf}_t = \mf_t\mg$. We have shown in Proposition \ref{LS-e} that for small enough variance, $||\Delta_t|| \leq \epsilon ||\tilde{\mf}_t||$, and thus we can expect a small error on the right hand side of the equation above. However, albeit the error $\Delta_t$ may be small compared to the ``signal $\tilde{\mf}_t$", the actual error in the solution is affected by the conditioning of the Gram matrix $\mg$. In fact, one can show that:
\begin{equation}
\displaystyle{\frac{||\mf_t^{\infty} - \mf_t||}{||\mf_t||} \leq \kappa(\mg) \epsilon},
\end{equation}
where $\mf_t^{\infty} = \lim_R \mf_t^R$, and $\kappa(\mg)$ denotes the condition number of the Gram matrix $\mg$. Thus, the true pacing item in the accuracy of the solution is the conditioning of the matrix $\mg$. In fact, if the input is Gaussian, then the conditioning of the matrix rapidly deteriorates as $M$ increases since the higher moments of a Gaussian increase as $(2M-1)!! \sigma_X^{2M}$, i.e, in factorial-exponential fashion. Thus, in practice, one cannot make $M$ large as the solution becomes highly sensitive due to the ill conditioning of the Gram matrix.

\textbf{\textit{Variance of Solution.}} As shown previously, the variance of the solution is directly proportional to the variance in the empirical Gram matrix $\mg^R$, and thus, the number of samples required is $R\sim \mathcal{O} ( [(4M-1)!! - (2M-1)!!^2]\sigma_X^{4M})$ as in \eqref{C_monomial}.
Hence, the central role in the accuracy and in the variance of the least squares estimate is played by the empirical Gram matrix $\mg^R$.\\

% \textcolor{red}{Raman: we can shunt this to the supplementary section with a short note here.}

\textbf{Model-based or Model-free?}
% \vspace{-2mm}
% In the model-free method, one can directly try to estimate the Taylor coefficients $\mk_t$ given those at the next time $\mk_{t+1}$ using a least squares estimate.
% The number of computations in the model-free method is $\mathcal{O}(2R^2M^2)$, while that in the model-based case is $\mathcal{O}((R^2+R)M^2)$.
The number of computations in the model-free method is $\approx 2 \times$ the computation in the model-based method, since $R$ is typically large. Thus, the model-free approach will result in higher computational efforts or $\approx 2 \times$ the variance as compared to model-based methods. \emph{(Detailed Discussion: See Appendix. 
)}

% Further, given the same number of computations, we would be able to do $\approx 2R$ simulations to the $R$ of the model-free method, and thus, the variance in the empirical Gram matrix $\mg^R$ is $\frac{1}{2}$ times that of the model-free empirical Gram matrix. Again, this shows that the model-based approach should be preferred, either to save computation or reduce the variance of the solution.

% \vspace{-2mm}
% \subsection{Orthonormal and Nonlinear Basis Functions}
% \vspace{-2mm}
\textit{\textbf{Orthonormal and Nonlinear Basis Functions:}} As we saw previously, the variance of the RL-LS solution \eqref{RL-LS} is highly significantly reduced when using o.n. basis functions such as the Hermite polynomials, see \eqref{C_monomial} vs \eqref{C_Hermite}. Thus, it is of interest to see if any added advantage can be gained by using such o.n. functions in the PPE case.\\
We may write the Hermite polynomials in terms of the monomials as: $\begin{bmatrix} h_1(\delta x)\\ \vdots \\ h_N(\delta x)\end{bmatrix} = H \begin{bmatrix}\delta x \\ \vdots \\ \delta x^N
\end{bmatrix}$,
where $h_i(\cdot)$ represent the Hermite polynomials, and $H$ is a lower triangular matrix that encodes the linear transformation from the monomials to the Hermite polynomials. Similarly, let $H^{-1}$ represent the inverse transformation from the Hermite polynomials to the monomials. Then, the PPE equation \ref{PPE} may be written in the Hermite basis as:
\begin{equation}
    \mk_t^H = \mathcal{C}^H_t + \mk_{t+1}^H \mathcal{B}_t^H,
\end{equation}
where $\mk_t^H$, and $\mathcal{C}_t^H$ consist of the Hermite coefficients (rather than the Taylor coefficients) and $\mathcal{B}_t^H = H \mf_t H^{-1}$. Note that $H$ and $H^{-1}$ are lower triangular, while $\mf_t$ is upper triangular, and therefore, $\mathcal{B}_t^H$ is fully populated, i.e., the PPE equations in the Hermite basis lose their perturbation/ upper triangular structure. This is the primary shortcoming of the o.n. representation, in particular, we shall show the necessity of the perturbation structure to an accurate solution in the following section after we outline the stochastic case.

Further, in the nonlinear basis case, nothing changes from the LS problem in \eqref{NL-RL}, since the formulation cannot distinguish deviations from a trajectory. Thus, the equations for the parameters, in general, will be fully coupled, and not have a perturbation structure.
% \vspace{-2mm}
\section{The Stochastic Case}
% \vspace{-2mm}
Now, we consider the stochastic case. The policy evaluation equation in the stochastic case becomes:
\begin{equation}
% $~~~~~~~~~~~~
    J_t(x_t) = c_t(x_t) + E[J_{t+1}(x_{t+1})],
% $ \\
\end{equation}
where $x_{t+1} = f(x_t) + \omega_t$, and $\omega_t$ is a Gaussian white noise sequence. \\
Let us define the nominal trajectory as the noise free case, i.e., $\bar{x}_{t+1} = f(\bar{x}_t)$, given some nominal initial condition $\bar{x_0}$. Next, let us consider perturbations $\delta x_t$ about this nominal. Thus, we get 

$J_{t+1}(x_{t+1}) = J_{t+1}(f(x_t) + \omega_t) = J_{t+1}(f(\bar{x}_t + \delta x_t) + \omega_t)$. Using: 

$f(\bar{x}_t + \delta x_t) = f(\bar{x}_t) + F_t^1 \delta x_t + F_t^2 \delta x_t^2 + \cdots$, 

$ J_t(x_t) = J_t(\bar{x}_t) + K_t^1 \delta x_t + K_t^2 \delta x_t^2 + \cdots$, 

\noindent and thus, 

$J_{t+1}(x_{t+1}) = J_{t+1}(\bar{x}_{t+1}) + K_{t+1}^1 (\delta f_t + \omega_t) + K_{t+1}^2 (\delta f_t + \omega_t)^2 + \cdots$,

\noindent where $\delta f_t = F_t^1 \delta x_t + F_t^2 \delta x_t^2 + \cdots$. Thus, using the fact that the noise sequence is white and Gaussian:
\begin{multline}
    E[J_{t+1}(x_{t+1})] = (\bar{J}_{t+1} + K_{t+1}^2 + \cdots) +\\ (K_{t+1}^1 F_t^1 + K_{t+1}^3 (3F_t^1) + \cdots)\delta x_t + \cdots.
\end{multline}
In particular, we may write:
\begin{equation}
% $~~~~~~~~~~~
    E[J_{t+1}(x_{t+1})] = \sum_k G^k(\mf_t, \mk_{t+1}) \delta x_t^k,
% $\\
\end{equation}
where recall that $\mf_t$ and $\mk_{t+1}$ are the Taylor coefficients of the dynamics at time $t$  and the cost function at time $t+1$ respectively, and $G^k(\mf_t,\mk_{t+1})$ are suitably defined functions of the Taylor coefficients $\mf_t, \mk_{t+1}$. Therefore, the policy evaluation equations in the stochastic case become (equating the coefficients of the different powers of $\delta x_t$ on both sides of the equation):
% \begin{align*}
$    \bar{J}_t = \bar{c}_t +        G^0(\mf_t,\mk_{t+1}), ~
    K_t^1 = C_t^1 + G^1(\mf_t, \mk_{t+1}),~
    K_t^2 = C_t^2 + G^2(\mf_t, \mk_{t+1}),~
    \cdots. $
% \end{align*}
The import of the above equations is that, unlike in the deterministic case, there is no perturbation structure in the stochastic case, and as a result all terms affect all other terms. This is the fundamental issue with the stochastic case, and makes the computation of a solution, in general, intractable. 

% \vspace{-1mm}
\subsection{The Necessity of the Perturbation Structure.}
% \vspace{-1mm}
%In general, the solution will always have higher order terms as outlined in Section III B, the only exception is the case of linear dynamics, where the number of terms will remain the same as those in the terminal cost function. Otherwise, if the dynamics has a Taylor series of order $k$, then the number of terms will grow as $k^{T-t}N_T$, where $N_T$ is the number of polynomial terms in the terminal cost function and $t<T$ is some intermediate time step.
Now, let us consider why the perturbation structure is critical to a solution. Write the $G^k(\cdot)$ above as $G^k (\mf_t, \mk_{t+1}) = G^{k,M}(\mf_t^M, \mk_{t+1}^M) + \delta G^{k,M}$, i.e., we keep only the first $M$ terms in the Taylor series $\mf_t$ and $\mk_{t+1}$, denoted by $\mf_t^M$ and $\mk_{t+1}^M$ respectively, and $\delta G^{k,M}$ denotes the error resulting from neglecting the higher order terms. Then, for an accurate solution, we require that $M$ is large enough such that $||\delta G^{k,M}|| \leq \epsilon ||G^{k,M}||$, for suitably small $\epsilon$. In the case of the PPE eq. (\ref{PPE}), we can choose any $M$, no matter how small, and still be assured of zero error in computing those coefficients, given the model knowledge. However, even with model knowledge, the computations are intractable in the stochastic case since the number of terms $M$ can be expected to be large for most problems (except linear problems).

Next, consider the RL scenario: the stochastic case is essentially similar to the deterministic case in that the LS solution is still given by a slight modification of eq. \eqref{RL-LS} where now:
% \begin{equation}
$    \Phi^R_{t+1} = \begin{bmatrix}
    \phi^1(x_{t+1}^{(1)}) & \cdots & \phi^1(x_{t+1}^{(R)})\\
    \vdots & \vdots & \vdots\\
    \phi^N(x_{t+1}^{(1)}) & \cdots & \phi^N(x_{t+1}^{(R)}),
    \end{bmatrix} $
% \end{equation}
and the random next states $x_{t+1}^{(i)}$ are sampled from the stochastic dynamics: $x_{t+1}^{(i)} = f(x_t^{(i)}) + \omega_t^{(i)}$. Thus, there will be a higher variance in the elements of the $\Phi_{t+1}^R$ matrix owing to the input noise $\omega_t$. In this case, due to the additional variance, $\Phi_{t+1}^R \Phi_t^{R'}$ will take longer to converge than $\Phi_t^R \Phi_t^{R'}$. Nonetheless, the error covariance estimates do not change from the deterministic case, and thus, the sample complexity depends on the empirical Gram matrix $\mg^R$, and remains the same (at least in the $\mathcal{O}(\cdot)$ sense).

\textit{Bias and Variance.} It is inevitable, even with the perturbation structure of the deterministic case, that the higher order terms corrupt the computations of the lower order terms in policy evaluation when done via RL. However, as shown in Proposition \ref{LS-e}, this error can be made arbitrarily small by controlling the input variance, and we can close our computations at any suitable $M$ such that the Gram matrix is well conditioned. However, in the stochastic case, we do not have this luxury since for a good solution, we will have to choose a suitably large $M$. This, in turn, implies that the bias is adversely affected due to the ill-conditioning of the Gram matrix, $\mg$, for large $M$. Furthermore, owing to the same reason, the variance of the empirical Gram matrix $\mg^R$ is also bound to be very high. Thus, in the stochastic case, we see that the RL solution will be overwhelmed by both the bias as well as the variance, thereby making the solution highly unreliable, i.e., one can always obtain a solution to the stochastic case by solving a least squares problem, but it is bound to be inaccurate owing to the fundamental structure of the problem.
Finally, the situations does not change if we use an o.n. basis like the Hermite basis or a nonlinear basis, due to the lack of a perturbation structure.

% \vspace{-1mm}
% \subsection{Summary of Results}
% \vspace{-1mm}
% We summarize the results in the following. \\
% % \begin{itemize}
% % \item 
% 1. It is fundamentally intractable to look for a feedback law over a large domain via RL (global/ nonlocal), since the variance of the solution grows factorial-exponentially in the order of the approximation.\\
% % \item 
% 2. The deterministic problem has a perturbation structure, in that higher order terms do not affect the calculation of lower order terms, and thus, when a model is known, the calculations can be closed at any order without affecting accuracy.\\
% % \item 
% 3. If the deterministic problem is solved in an RL fashion, then an arbitrarily accurate solution can be found, if and only if we seek a suitably local solution, enforced via constraining the random exploration around a nominal trajectory. \\
% % \item 
% 4. The stochastic problem is intractable in the sense that it lacks a perturbation structure, and thus, requires a suitably high order approximation to be accurate. However, if solved in an RL fashion, this necessarily implies very high variance, and hence, inaccuracy in the result.\\
% % \item 
% 5. The perturbation structure and locality of the solution are key to an accurate RL implementation.
% % \end{itemize}
% \input{Algorithm.tex}
% \input{Simulations.tex}
% \vspace{-3mm}
\section{Empirical Results}
% \vspace{-3mm}
\label{section:Results}
Thus far in this paper, we have established theoretical results that show that finding a global (higher order) solution for the policy evaluation problem is subject to very high error and variance. In this section, we provide empirical evidence of this explosive growth of errors. In order to accomplish this, we need a system for which we know the true solution to the PE problem. To this end, we assume a discretized system of the form:
% \begin{align}
%     \dot{x}  = -x+\epsilon x^3
% \end{align}
% which can be written in discrete-time as:
\begin{align}
% $
x_{t+1} = x_t + \delta(-x_t + \epsilon x_t^3 ), 
\end{align}
% $ 
where $\delta$ is the time discretization. Let us define the terminal cost to be of the quadratic form: 
% \begin{align}
$
    J_T(x_T) = \alpha x_T^2
$
% \end{align}
and the incremental cost to be: $c_t(x_t) = cx_t^2$, so that the DP equation can be written as:
\begin{align}
% $
J_t(x_t) = cx_t^2 + J_{t+1}(x_{t+1}).
% $
\end{align}
Since this system has a polynomial nonlinearity, it is straightforward to find the cost functions backward in time as polynomials. Given that we have this true answer, we can now find the error in our solution as we sweep back in time, as a function of the number of samples, as well as the exploration parameter $\sigma_X$ which determines whether we explore locally or globally.
% \end{align}
% \textcolor{red}{Maybe we should write mean of the error and variance of the error in the plots?}
We show results for polynomial approximations of order $N = M = \{6,12,18\}$ as we back propagate 3 steps in time. The required number of basis functions at time $t = \{T, T-1, T-2, T-3\}$ are $M_t = N_t = \{2, 6, 18, 54 \}$, respectively. The parameter values assumed are: $\epsilon = 1, \delta = 0.1, c = 10,$ and $\alpha = 10$. 

\begin{figure}[h!]
    % \vspace{-2mm}
    \centering
    \includegraphics[width= 8 cm,height= 9 cm,keepaspectratio]{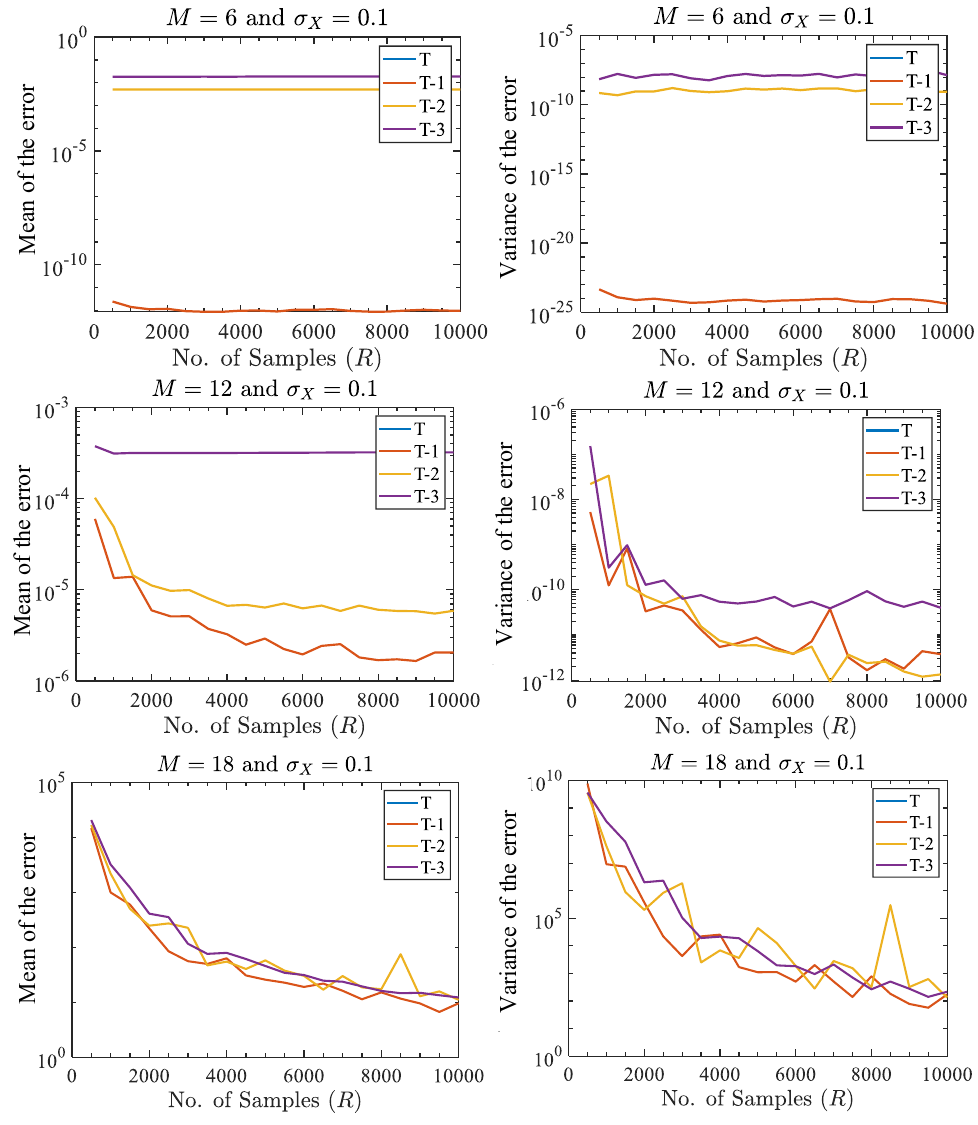} 
    \vspace{-3mm}
    \caption{Results for varying number of samples for exploration parameter $\sigma_X = 0.1$ and different orders of approximation, $M=6,12,18$, for the backward sweep in time till $T-3$.}
    \label{Fig_res1}
\end{figure}
% \vspace{-3mm}
For a ``small" exploration noise of $\sigma_X = 0.1$, as we increase the number of basis functions $M_t$, better results in the mean error are obtained as we propagate back in time due to the increase in number of required basis functions (1st column of Fig.~\ref{Fig_res1}). Although the mean error improves till M= 12, there is a marked increase in the variance of the error with increased basis function (2nd column of Fig.~\ref{Fig_res1}) from 12 to 18.
Also, notice the large number of samples required to reduce the variance of the error even for a one-dimensional problem.

\begin{figure}[h!]
    % \vspace{-2mm}
    \centering
    \includegraphics[width= 8 cm,height= 9 cm,keepaspectratio]{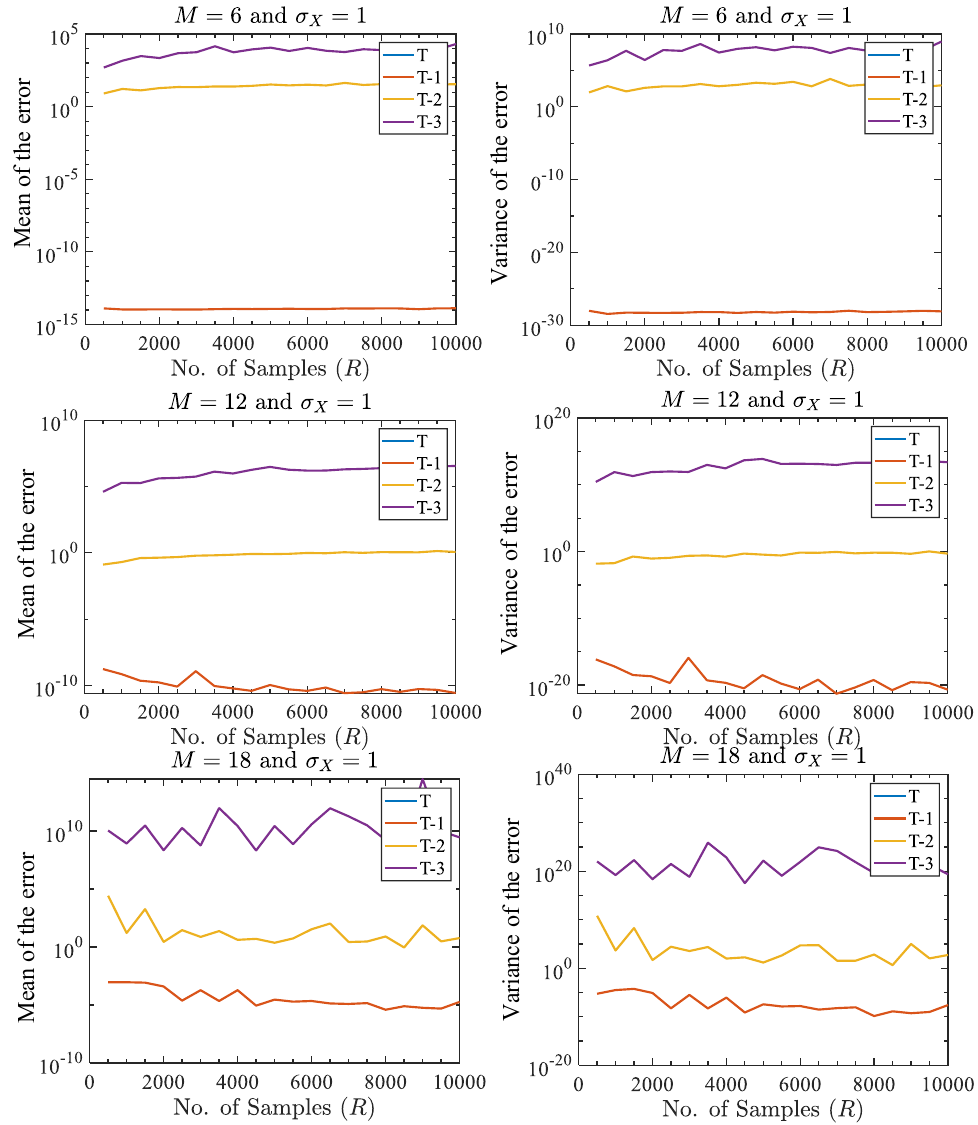} 
    \vspace{-2mm}
    \caption{Results for varying number of samples for $\sigma_X = 1$.}
    \label{Fig_res2}
\end{figure}
% \vspace{-3mm}
For the large exploration case with $\sigma_X =1$, the mean error and its variance become very high (Fig.~\ref{Fig_res2}). Also, notice that even with a large number of samples, the error values do not decrease as in the small noise case of $\sigma_X = 0.1$. Another important observation here is that albeit the performance at $T-1$ is very good, as we sweep back in time, the errors and their variance show explosive growth and become unacceptable. \\
Thus, albeit preliminary, this empirical evidence suggests that solution accuracy, measured via the mean error and its variance, is indeed adversely affected by higher order approximations and/ or large exploration. Conversely, the only way to ensure accuracy is to have a suitably low order approximation which needs to be enforced by a suitably local exploration.

\vspace{-2mm}
\section{Conclusions and Implications}
\vspace{-1mm}
\label{section:conclusion}
In this paper, we have studied the inherent structure of the Reinforcement Learning problem. Concentrating on the policy evaluation problem, we have shown that unless we seek local solutions, the answers found are bound to suffer from high variance, and thus, be inaccurate. In particular, the deterministic problem has a perturbation structure that can be exploited to obtain arbitrary accurate local solutions. It is also shown that the stochastic problem lacks the perturbation structure, and thus, is bound to suffer from high variance when solved in an RL fashion. \\
The primary issue one has to worry about now: ``what now for stochastic control?" It is intractable so the best seems to be the deterministic approximation. However, 
% in light of the result in Paper 1, 
the deterministic solution is optimal locally, and thus, when allied with replanning of the nominal trajectory, we can recover at least near-optimal solution for the stochastic case. We also conjecture that this local replanning based approach is ``fundamentally" the best that one can hope to achieve via computation/ RL. Our future research will concentrate on doing an extensive version of the experiments that we started in this paper, and provide further evidence that local RL methods are the ones to pursue.

\vspace{-2mm}
\bibliographystyle{IEEEtran}
\bibliography{CDC_refs,MAP_refs1,naveed_references,MohammadRafi}

% Generated by IEEEtran.bst, version: 1.12 (2007/01/11)
\begin{thebibliography}{10}
\providecommand{\url}[1]{#1}
\csname url@samestyle\endcsname
\providecommand{\newblock}{\relax}
\providecommand{\bibinfo}[2]{#2}
\providecommand{\BIBentrySTDinterwordspacing}{\spaceskip=0pt\relax}
\providecommand{\BIBentryALTinterwordstretchfactor}{4}
\providecommand{\BIBentryALTinterwordspacing}{\spaceskip=\fontdimen2\font plus
\BIBentryALTinterwordstretchfactor\fontdimen3\font minus
  \fontdimen4\font\relax}
\providecommand{\BIBforeignlanguage}[2]{{%
\expandafter\ifx\csname l@#1\endcsname\relax
\typeout{** WARNING: IEEEtran.bst: No hyphenation pattern has been}%
\typeout{** loaded for the language `#1'. Using the pattern for}%
\typeout{** the default language instead.}%
\else
\language=\csname l@#1\endcsname
\fi
#2}}
\providecommand{\BIBdecl}{\relax}
\BIBdecl

\bibitem{bertsekas1}
D.~P. Bertsekas, \emph{Dynamic Programming and Optimal Control, vols I and
  II}.\hskip 1em plus 0.5em minus 0.4em\relax Cambridge, MA: Athena Scientific,
  2012.

\bibitem{bellman}
R.~E. Bellman, \emph{Dynamic Programming}.\hskip 1em plus 0.5em minus
  0.4em\relax Princeton, NJ: Princeton University Press, 1957.

\bibitem{parr3}
M.~Lagoudakis and R.~Parr, ``Least squares policy iteration,'' \emph{Journal of
  Machine Learning Research}, vol.~4, pp. 1107--1149, 2003.

\bibitem{RLHD1}
R.~Akrour, A.~Abdolmaleki, H.~Abdulsamad, and G.~Neumann, ``Model free
  trajectory optimization for reinforcement learning,'' in \emph{Proc. of the
  ICML}, 2016.

\bibitem{RLHD2}
E.~Todorov and Y.~Tassa, ``Iterative local dynamic programming,'' in
  \emph{Proc. of the IEEE Int. Symposium on ADP and RL.}, 2009.

\bibitem{RLHD3}
E.~Theodorou, Y.~Tassa, and E.~Todorov, ``Stochastic differential dynamic
  programming,'' in \emph{Proc. of the ACC}, 2010.

\bibitem{RLHD4}
S.~Levine and P.~Abbeel, ``Learning neural network policies with guided search
  under unknown dynamics,'' in \emph{Advances in NIPS}, 2014.

\bibitem{RLHD5}
S.~Levine and K.~Vladlen, ``Learning complex neural network policies with
  trajectory optimization,'' in \emph{Proc. of the ICML}, 2014.

\bibitem{azar2012sample}
M.~G. Azar, R.~Munos, and B.~Kappen, ``On the sample complexity of
  reinforcement learning with a generative model,'' \emph{arXiv preprint
  arXiv:1206.6461}, 2012.

\bibitem{kakade2003sample}
S.~M. Kakade, ``On the sample complexity of reinforcement learning,'' Ph.D.
  dissertation, UCL (University College London), 2003.

\bibitem{munos2008finite}
R.~Munos and C.~Szepesv{\'a}ri, ``Finite-time bounds for fitted value
  iteration.'' \emph{Journal of Machine Learning Research}, vol.~9, no.~5,
  2008.

\bibitem{Recht2019sample}
S.~Dean, H.~Mania, N.~Matni, B.~Recht, and S.~Tu, ``On the sample complexity of
  the linear quadratic regulator,'' \emph{Foundations of Computational
  Mathematics}, pp. 1--47, 2019.

\bibitem{recht2019tour}
B.~Recht, ``A tour of reinforcement learning: The view from continuous
  control,'' \emph{Annual Review of Control, Robotics, and Autonomous Systems},
  vol.~2, pp. 253--279, 2019.

\bibitem{henderson2018deep}
P.~Henderson, R.~Islam, P.~Bachman, J.~Pineau, D.~Precup, and D.~Meger, ``Deep
  reinforcement learning that matters,'' in \emph{Thirty-Second AAAI Conference
  on Artificial Intelligence}, 2018.

\bibitem{mohamed2020optimality}
M.~N.~G. Mohamed, S.~Chakravorty, and R.~Wang, ``Optimality and tractability in
  stochastic nonlinear control,'' \emph{arXiv preprint arXiv:2004.01041}, 2020.

\bibitem{Mayne_1}
D.~Q. Mayne, ``Model predictive control: Recent developments and future
  promise,'' \emph{Automatica}, vol.~50, pp. 2967--2986, 2014.

\bibitem{Mayne_2}
J.~B. Rawlings and D.~Q. Mayne, \emph{Model Predictive Control: Theory and
  Design}.\hskip 1em plus 0.5em minus 0.4em\relax Madison, WI: Nob Hill, 2015.

\bibitem{T-MPC1}
L.~Chisci, J.~A. Rossiter, and G.~Zappa, ``Systems with persistent
  disturbances: predictive control with restricted contraints,''
  \emph{Automatica}, vol.~37, pp. 1019--1028, 2001.

\bibitem{T-MPC2}
J.~A. Rossiter, B.~Kouvaritakis, and M.~J. Rice, ``A numerically stable state
  space approach to stable predictive control strategies,'' \emph{Automatica},
  vol.~34, pp. 65--73, 1998.

\bibitem{T-MPC3}
D.~Q. Mayne, E.~C. Kerrigan, E.~J. van Wyk, and P.~Falugi, ``Tube based robust
  nonlinear model predictive control,'' \emph{International journal of robust
  and nonlinear control}, vol.~21, pp. 1341--1353, 2011.

\bibitem{Mayne_3}
D.~Mayne, ``Robust and stochastic mpc: Are we going in the right direction?''
  \emph{IFAC-PapersOnLine}, vol.~48, no.~23, pp. 1 -- 8, 2015, 5th IFAC
  Conference on NMPC 2015.

\bibitem{Courant-Hilbert}
R.~Courant and D.~Hilbert, \emph{Methods of Mathematical Physics, vol.
  II}.\hskip 1em plus 0.5em minus 0.4em\relax New York: Interscience
  publishers, 1953, vol. 336.

\bibitem{gnedenko}
B.~Gnedenko, \emph{Theory of Probability}.\hskip 1em plus 0.5em minus
  0.4em\relax New York, NY: Chelsea, 1968.

\bibitem{Koopman1}
S.~L. Brunton, J.~L. Proctor, and J.~N. Kutz, ``Discovering governing equations
  from data by sparse identification of nonlinear dynamical systems,''
  \emph{Proceedings of the National Academy of Sciences}, vol. 113, no.~15, pp.
  3932--3937, 2016.

\end{thebibliography}

% \input{Appendix.tex}
% \documentclass[twoside]{article}

% \usepackage{aistats2021}
% If your paper is accepted, change the options for the package
% aistats2021 as follows:
%
%\usepackage[accepted]{aistats2021}
%
% This option will print headings for the title of your paper and
% headings for the authors names, plus a copyright note at the end of
% the first column of the first page.

% If you set papersize explicitly, activate the following three lines:
%\special{papersize = 8.5in, 11in}
%\setlength{\pdfpageheight}{11in}
%\setlength{\pdfpagewidth}{8.5in}

% If you use natbib package, activate the following three lines:
%\usepackage[round]{natbib}
%\renewcommand{\bibname}{References}
%\renewcommand{\bibsection}{\subsubsection*{\bibname}}

% If you use BibTeX in apalike style, activate the following line:
%\bibliographystyle{apalike}

% \begin{document}

% If your paper is accepted and the title of your paper is very long,
% the style will print as headings an error message. Use the following
% command to supply a shorter title of your paper so that it can be
% used as headings.
%
%\runningtitle{I use this title instead because the last one was very long}

% If your paper is accepted and the number of authors is large, the
% style will print as headings an error message. Use the following
% command to supply a shorter version of the authors names so that
% they can be used as headings (for example, use only the surnames)
%
%\runningauthor{Surname 1, Surname 2, Surname 3, ...., Surname n}

% Supplementary material: To improve readability, you must use a single-column format for the supplementary material.
\onecolumn
% \aistatstitle{AISTATS 2021: Supplementary Materials}

% \section*{Supplementary Material \RomanNumeralCaps{1}: Supplementary Materials For This Paper}
\section*{APPENDIX : Detailed Proofs and Supplementary Material}

% \section{Detailed Proofs}

The supplementary materials contain detailed proofs of the results that are missing in the main paper.

\subsection{Proof of Proposition 1}
\begin{proof}
Due to Assumption \ref{basis}, we may write: 
\begin{equation}
\sum_{i=1}^{N_t} \alpha_t^i \phi^i(x) = \sum_{i=1}^ N c_t^i \phi^i(x) + \sum_{i=1}^{N_{t+1}} \alpha_{t+1}^i \sum_{j=1}^{N_t} \beta^{ij} \phi^j(x).
\end{equation}
Taking inner products on both sides of the above equation, we obtain:
\begin{equation}
    \sum_{i=1}^{N_t} \alpha_t ^i <\phi^i, \phi^k> = \sum_{i=1}^ {N_t} c_t^i <\phi^i, \phi^k>  +
    \sum_{i=1}^{N_{t+1}} \alpha_{t+1}^i \sum_j \beta^{ij} <\phi^j,\phi^k>, \, \forall\, k,
\end{equation}
where $<\phi^i, \phi^k> = \int \phi^i(x) \phi^k(x) p_t(x) dx$, i.e., the weighted inner product using the density $p_t(\cdot)$. Denote the Gram matrix $\mathcal{G}_t = [<\phi^i, \phi^k>]$, $i,k = 1 \cdots N_t$, and $\alpha_{t+1}^i = 0$, for all $i> N_{t+1}$. The above equations can then be written as:
\begin{equation}\label{Galerkin}
    \bar{\alpha}_t \mathcal{G}_t = \bar{c}_t \mathcal{G}_t + \bar{\alpha}_{t+1}B_t \mathcal{G}_t.
\end{equation}
Taking the inverse of $\mathcal{G}_t$ on both sides, the answer follows.
\end{proof}

\subsection{Proof of Proposition 2}
\begin{proof}
Owing to Assumption \ref{basis}, $\Phi_{t+1}^R = B_t \Phi_t^R$, and thus, $\Phi_{t+1}^R\Phi_t^{R'} = B_t \Phi_t^R \Phi_t^{R'}.$ Let $\Phi_t^R \Phi_t^{R'} \equiv G^R_t$. Then, the RL estimate can be written as:
\begin{align}
    \bar{\alpha}_t^R = \bar{c}_t G_t^R (G_t^R)^{-1} + \bar{\alpha}_{t+1}^R B_t G_t^R (G_t^R)^{-1} + V_t^R \Phi_t^{R'}(G_t^R)^{-1},
\end{align}
where $V_t^R = [v_t^{(1)}, \cdots v_t^{(R)}]$ is the noise sequence.
 Noting that the noise sequence is i.i.d., the covariance of $V^R_t$ is some $\sigma_v^2 I_R$, where $I_R$ is the $R$ dimensional identity matrix and $\sigma_v^2$ is the variance of the i.i.d. noise sequence. Thus, the covariance of the least squares estimate above is given by 
\begin{align*}
P_t^R &= \sigma_v^2 (G^R_t)^{-1}, ~
G^R_t = R \mathcal{G}^R_t,\\
\mathcal{G}^R_t &\equiv
\begin{bmatrix} \frac{1}{R}\sum_i \phi^1 (x_t^{(i)})\phi^1(x_t^{(i)}), \cdots, \frac{1}{R}\sum_i \phi^1(x_t^{(i)})\phi^N(x_t^{(i)})\\
\ddots \\ \frac{1}{R}\sum_i \phi^N (x_t^{(i)})\phi^1(x_t^{(i)}), \cdots, \frac{1}{R}\sum_i\phi^N(x_t^{(i)})\phi^N(x_t^{(i)}) \end{bmatrix}.
\end{align*}
We need to show that $\bar{\alpha}_t^R \rightarrow \bar{\alpha}_t$ in m.s. sense. We proceed by induction. Let us assume that $\bar{\alpha}_{t+1}^R \rightarrow \bar{\alpha}_{t+1}$ in m.s.s. Define $S = \frac{1}{R}\bar{\alpha}_{t+1}^R \Phi_{t+1}^R \Phi_t^{R'} (\mg_t^R)^{-1} - \bar{\alpha}_{t+1}B_t$. Then:\\
\begin{equation}
E||\bar{\alpha}_{t}^R  -\bar{\alpha}_t||^2 = E||S||^2 + E[S (V_t^R\Phi_t^{R'}(\Phi_t^R\Phi_t^{R'})^{-1})'] + \text{Tr} \frac{1}{R^2}E[(\mg_t^R)^{-1} \Phi_t^R V_t^{R'}V_t^R \Phi_t^{R'}(\mg_t^R)^{-1}],
\end{equation}
where $\text{Tr}$ denotes the trace of a matrix. Since $\bar{\alpha}_{t+1}^R \rightarrow \bar{\alpha}_{t+1}^R$ in m.s.s by assumption, and $\frac{1}{R} \Phi_{t+1}^R \Phi_t^{R'} = B_t\mg_t^R$,  $E||S||^2 \rightarrow 0$ as $R\rightarrow \infty$. The second term is zero since $V_t^R$ is independent of $\Phi_t^R$ and zero mean. For the last term, note that:
\begin{align}
    \text{Tr} \frac{1}{R^2}E[(\mg_t^R)^{-1} \Phi_t^R V_t^{R'}V_t^R \Phi_t^{R'}(\mg_t^R)^{-1}]
= \frac{1}{R^2}E_{\mg_t^R} [(\mg_t^R)^{-1} \Phi_t^R E[V_t^{R'} V_t^{R}] \Phi_t^{R'}(\mg_t^R)^{-1}| \mg_t^R]
= \frac{\sigma_v^2}{R} E_{\mg_t^R}[(\mg_t^R)^{-1}]
\end{align}
Noting that $\mg_t^R \rightarrow \mg_t$ almost surely, and $\mg_t$ is a finite matrix, it follows that $\frac{\sigma_v^2}{R} E_{\mg_t^R}[(\mg_t^R)^{-1}] \rightarrow 0$ as $R\rightarrow \infty$. Therefore, the above means that $\bar{\alpha}_t^R \rightarrow \bar{\alpha}_t$ in m.s.s. if $\bar{\alpha}_{t+1}^R \rightarrow \bar{\alpha}_{t+1}$ in m.s.s. A similar argument as above can be used to show that $\bar{\alpha}_T^R \rightarrow \bar{\alpha}_T$ in m.s.s. for the final time time $T$, and thus, by backwards induction the result follows for all $t$.\\
\end{proof}

\subsection{Proof of Corollary 1}
\begin{proof}
In general, since $N < N_t$, $\phi^i(f(x)) = \sum_{j=1}^{N_t} \beta^{ij} \phi^j(x) = \sum_{j=1}^N \beta^{ij} \phi^j(x) + \sum_{j=N+1}^{N_t} \beta^{ij} \phi^j(x)$. Thus,
$\Phi_{t+1}^R = B_t \Phi_t^R + \tilde{B}_t \tilde{\Phi}_t^R$, where
$ B_t = [\beta^{ij}], i,j =1,\cdots N,$ $\tilde{B}_t = [\beta^{ij}], i=1,\cdots N; j = N+1, \cdots N_t$, 
\begin{align}
    \Phi_t^ R = \begin{bmatrix} \phi^1(x_t^{(1)}), \cdots, \phi^1(x_t^{(R)})\\ \ddots\\ \phi^N(x_t^{(1)}), \cdots, \phi^N(x_t^{(R)}) \end{bmatrix}, ~ \text{and}, ~ \tilde{\Phi}_t^ R = \begin{bmatrix} \phi^{N+1}(x_t^{(1)}), \cdots, \phi^{N+1}(x_t^{(R)})\\ \ddots\\ \phi^{N_t}(x_t^{(1)}), \cdots, \phi^{N_t}(x_t^{(R)}) \end{bmatrix}.
\end{align}
Therefore, $\Phi_{t+1}^R \Phi_t^{R'} = B_t \Phi_t^R \Phi_t^{R'} + \tilde{B}_t \tilde{\Phi}_t^R \Phi_t^{R'}$, and the second term $\Delta \equiv \tilde{B}_t \tilde{\Phi}_t^R \Phi_t^{R'}$ on the right, in general, biases the coefficients $\alpha_t^i$, for $i = 1\cdots N$. Further, even if $\Delta =0$, which is feasible if the basis functions are orthonormal, i.e., $< \phi^i, \phi^j> = \delta_{ij}$, where $\delta_{ij}$ is the Kronecker delta function, the coefficients $\alpha_t^i$, $i> N$, are missed, which then affects the calculation of $J_{t-1}(\cdot)$, in particular, the coefficients $\alpha_{t-1}^i$, $i=1,\cdots N$. Thus, the solution, even in such a case, gets biased at the next time step $t-1$.\\
\end{proof}

\subsection{Proof of Theorem 1}
\begin{proof}
Recall that $
\Phi^R_t = \begin{bmatrix}
\phi^1(x^{(1)}_t) \cdots \phi^1(x^{(R)}_t)\\
\ddots\\
\phi^{N_t}(x^{(1)}_t) \cdots \phi^{N_t}(x^{(R)}_t)
\end{bmatrix}.$

We have $\mg^R_t = \mg_t + \mathcal{E}^R_t $, where 
\begin{align}
    \mg_t = \begin{bmatrix}
E[\phi^1]^2 \cdots E[\phi^1\phi^{N_t}]\\
\ddots\\
E[\phi^{N_t}\phi^1] \cdots E[\phi^{N_t}]^2\end{bmatrix} = \lim_R \frac{1}{R} \Phi^R_t\Phi_t^{R'},
\end{align}
is the Gram matrix for the basis functions $\phi^i$ at time $t$. 
Assuming that $R$ is large enough, we may write: 
\begin{align}
    (G^R_t)^{-1} = \frac{1}{R} (\mg_t + \mathcal{E}^R_t)^{-1} = \frac{1}{R} \sum_{k=0}^{\infty} (-1)^k \mg_t^{-1}(\mathcal{E}^R_t \mg_t^{-1})^k.
\end{align}
Thus, taking (spectral) norms on both sides, we obtain:
$||(G^R_t)^{-1}|| \leq \frac{1}{R} \sum_k ||\mg_t^{-1}||^{k+1}||\mathcal{E}^R_t||^k.$
Using the assumption that $||\mg_t^{-1}|| < C$, we obtain:
\begin{equation}\label{eq.1}
    ||(G^R_t)^{-1}|| \leq \frac{C}{R}\sum_{k=0}^{\infty} C^k||\mathcal{E}^R_t||^k.
\end{equation}
Next, we find a bound on $||\mathcal{E}^R_t||$. \\
Note that $\mathcal{E}^R_t = H_t \bar{\mathcal{E}}^R_t H'_t,$ where $\bar{\mathcal{E}}^R_t = \begin{bmatrix}
\bar{\epsilon}_2^R \cdots \bar{\epsilon}_{M_t+1}^R\\
\ddots\\
\bar{\epsilon}_{M_t+1}^R \cdots \bar{\epsilon}_{2M_t}^R,
\end{bmatrix}$
where $\bar{\epsilon}_p^R = E[x^p] - \frac{1}{R} \sum_{i=1}^R (x^{(i)})^p,$ i.e, the error between the moment $E[x^p]$ and its R-sample empirical estimate. Then, it is clear that $E[\bar{\epsilon}_p^R] = 0$, and $\text{var}[\bar{\epsilon}_p^R] = \frac{\text{var}(x^p)}{R}.$ Also note that $\text{var}[x^p] = E[x^{2p}] - (E[x^p])^2 = (2p-1)!! \sigma_X^{2p}$, if $p$ is odd, and $[(2p-1)!! - (p-1)!!^2]\sigma_X^{2p}$, if $p$ is even.\\
If $R$ is large enough, then owing to the Central Limit Theorem \cite{gnedenko}, $\bar{\epsilon}_p^R \sim \mathcal{N}(0, \frac{var[x^p]}{R})$. Further, given that $R$ is large enough, it is reasonable to assume that all $\bar{\epsilon}_p^R$, $p<2M_t$, have converged, and thus, the error matrix $\bar{\mathcal{E}}^R_t \approx \begin{bmatrix}
0 &\cdots& 0\\
\vdots & \vdots & \vdots\\
0 &\cdots& \bar{\epsilon}_{2M_t}^R
\end{bmatrix}.$
And thus, it follows that $\mathcal{E}^R_t = \bar{\epsilon}_{2N}^R H_{M_t} H^{N_t}$, which implies that:
\begin{equation}\label{eq.2}
    ||\mathcal{E}^R_t|| \leq |\bar{\epsilon}_{2M_t}^R| ||H_{M_t}||||H^{N_t}|| \leq C' |\bar{\epsilon}_{2M_t}^R|,
\end{equation}
using the assumption that $||H_{M_t}||||H^{N_t}|| \leq C'$.
Hence, using \eqref{eq.1} and \eqref{eq.2}, it follows that if
\begin{equation}\label{star}
CC'|\bar{\epsilon}_{2M_t}^R| \leq \beta < 1,
\end{equation}
then $||(G^R_t)^{-1}|| \leq \frac{C}{R(1-\beta)}$, i.e. $||P^R_t|| \leq \frac{\sigma_v^2 C}{R(1-\beta)}$ Thus, it suffices to have:
\begin{equation}
    R \geq \frac{\sigma_v^2 C}{ \delta(1-\beta)}, 
\end{equation}
for $||P^R_t|| \leq \delta.$
However, note that condition \eqref{star} still needs to be satisfied. \\
Next, using the tail bounds for a Gaussian random variable \cite{gnedenko}, we have that $Prob(|\bar{\epsilon}_{2M_t}^R| > \frac{n\sigma_{2M_t}}{\sqrt{R}}) \leq 2 e^{-n^2/2}$. Thus, if we require that $R$ is such that:
\begin{equation}\label{dstar}
\frac{n\sigma_{2M_t}}{\sqrt{R}} \leq \frac{\beta}{CC'},
\end{equation} 
then owing to the tail bounds, we have that: $Prob(||P^R_t|| \leq \delta) > 1- 2e^{-n^2/2}$. Thus, using the conditions \eqref{star} and \eqref{dstar}, if choose $R= \max[\frac{\sigma_v^2 C}{\delta(1-\beta)}, (\frac{nCC'}{\beta})^2\sigma_{2M_t}^2]$, the result follows.
\end{proof}

\subsection{Proof of Lemma 1}
\begin{proof}
It is clear that $\delta \chi_t^R \delta \chi_t^{R'} = R \mg^R$.
Next, $\delta x_{t+1}^{(i)} = \sum_k F_t^k (\delta x_t^{(i)})^k +  v_t^{(i)}$, and $\delta X_{t+1}^R\delta\chi_t^{R'} = [d^1, \cdots d^M]$, where $d^l \equiv \sum_{k=1}^M \sum_{i=1}^R F_t^k (\delta x_t^{(i)})^k (\delta x_t^{(i)})^l + \sum_{k>M} \sum_{i=1}^R F_t^k (\delta x_t^{(i)})^k (\delta x_t^{(i)})^l$. However, $\sum_{k>M} \sum_{i=1}^R F_t^k (\delta x_t^{(i)})^k (\delta x_t^{(i)})^l = R \Delta_t^{R,l}$. Thus, using the above identities, it follows that: $\delta X_{t+1}^R\delta \chi_t^{R'} = \mf_t (R\mg^R) + R \Delta_t^R + V_t^R \delta \chi_t^{R'}$, and substituting this into \eqref{MB-LS}, the result follows. 
\end{proof}

\subsection{Proof of Proposition \ref{LS-e}}

First, we need the following basic result.

\begin{lemma}\label{LS-e.1}
Let $h(z) = \sum_k H^k z^k$ be the Taylor series expansion of the function $h(\cdot)$, and let it be convergent for any finite $z$. Let the lowest power in the expansion be $L$. Then, given any $M \geq L$, and $\epsilon >0$, there exists a $Z$ such that if $z < Z$, $|\sum_{k> M} H^k z^k| \leq \epsilon|\sum_{k=1}^M H^k z^k|$. 
\end{lemma}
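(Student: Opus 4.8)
The plan is to exploit the fact that, as $z \to 0$, the partial sum $\sum_{k=1}^M H^k z^k$ is dominated by its lowest-order term $H^L z^L$, which is nonzero by the very definition of $L$, whereas the tail $\sum_{k>M} H^k z^k$ is of order $|z|^{M+1}$; hence the ratio of the tail to the partial sum is $O(|z|^{M+1-L})$ and, since $M \ge L$ forces the exponent to be positive, it can be driven below $\epsilon$ by shrinking $z$.

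First I would record two consequences of the hypothesis that the series converges for every finite $z$ (equivalently, $h$ is \emph{entire}): (i) $\sum_k |H^k| r^k < \infty$ for every $r > 0$, so in particular, taking $r = 1$, the constant $A := \sum_{k>M} |H^k|$ is finite; and (ii) for $|z| \le 1$ one gets the tail estimate $\left| \sum_{k>M} H^k z^k \right| \le |z|^{M+1} \sum_{k>M} |H^k|\,|z|^{k-M-1} \le A\,|z|^{M+1}$, using $|z|^{k-M-1} \le 1$. Next I would produce a matching lower bound on the denominator: writing $\sum_{k=1}^M H^k z^k = H^L z^L + \sum_{k=L+1}^M H^k z^k$ and applying the triangle inequality gives $\left|\sum_{k=1}^M H^k z^k\right| \ge |z|^L\big(|H^L| - \sum_{k=L+1}^M |H^k|\,|z|^{k-L}\big)$; since each of the finitely many correction terms $|H^k|\,|z|^{k-L}$ tends to $0$ as $z \to 0$, there is some $Z_1 \le 1$ such that for $|z| \le Z_1$ the bracket is at least $\tfrac12 |H^L|$, whence $\left|\sum_{k=1}^M H^k z^k\right| \ge \tfrac12 |H^L|\,|z|^L$ for $0 < |z| \le Z_1$.

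Combining the two bounds, for $0 < |z| \le Z_1$,
\[
\frac{\left|\sum_{k>M} H^k z^k\right|}{\left|\sum_{k=1}^M H^k z^k\right|} \;\le\; \frac{A\,|z|^{M+1}}{\tfrac12 |H^L|\,|z|^L} \;=\; \frac{2A}{|H^L|}\,|z|^{M+1-L}.
\]
Because $M \ge L$, the exponent $M+1-L$ is at least $1$, so I would finish by choosing $Z \le Z_1$ small enough that $\tfrac{2A}{|H^L|} Z^{M+1-L} \le \epsilon$; then $z < Z$ yields the claimed inequality, the case $z = 0$ being trivial (both sides vanish) and the case of a polynomial $h$ (tail identically zero) being immediate.

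I do not expect a genuine obstacle here: the only points requiring a little care are the passage from ``each of the finitely many correction terms is small'' to a single uniform threshold $Z_1$, and the use of entireness (not just convergence at one point) to guarantee $A < \infty$ and the clean tail bound for $|z|\le 1$. Beyond that the argument is just a comparison of leading orders in $z$.
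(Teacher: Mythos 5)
Your proof is correct, and it takes a genuinely different route from the paper's. The paper argues qualitatively: for each fixed $z$ it invokes convergence to define a threshold order $M^{\epsilon}(z)$ beyond which the tail is $\epsilon$-small relative to the partial sum, sets $\bar{M}^{\epsilon}(z) = \max_{z' \leq z} M^{\epsilon}(z')$ (asserting finiteness as ``trivial''), and then takes $Z$ to be the first point where $M^{\epsilon}$ exceeds the given $M$. You instead do a direct leading-order comparison: an explicit upper bound $\bigl|\sum_{k>M} H^k z^k\bigr| \leq A|z|^{M+1}$ for $|z|\leq 1$ using absolute convergence of an entire series, an explicit lower bound $\bigl|\sum_{k=1}^{M} H^k z^k\bigr| \geq \tfrac12 |H^L||z|^L$ near the origin, and the observation that the ratio is $O(|z|^{M+1-L})$ with positive exponent since $M \geq L$. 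Your version is the more rigorous of the two: it makes quantitative what the paper leaves implicit, and in particular it sidesteps the paper's unjustified steps (the finiteness and monotone behavior of $M^{\epsilon}(z)$, and the tacit assumption that $S_M(z) \neq 0$ on the whole interval, which your lower bound actually establishes near $0$). The only implicit assumption you share with the paper is that $L \geq 1$, so that the leading term $H^L z^L$ is actually present in the sum $\sum_{k=1}^{M}$; this holds in the application (the dynamics perturbation $\delta f$ has no constant term), so it is not a gap in context. What the paper's phrasing buys, if anything, is that it does not single out the leading coefficient $H^L$; what yours buys is an explicit, checkable rate $Z \sim (\epsilon |H^L|/2A)^{1/(M+1-L)}$.
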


\begin{proof}
Let $S_M (z) = \sum_{k=1}^M H^k z^k$, and $\delta S_M (z) =\sum_{k> M} H^h z^k $. Since the series is convergent, given any $\epsilon >0$, there exists an $M^{\epsilon} (z) < \infty$, such that for all $M > M^{\epsilon} (z)$, $\delta S_M(z) \leq \epsilon S_M(z)$. Let $\bar{M}^{\epsilon}(z) = \max_{z'\leq z} M^{\epsilon}(z')$. That this value is finite is trivial. Then, it follows that if $M'$ is chosen such that $M'> \bar{M}^{\epsilon}(z)$, then for any $z'<z$, $\delta S_{M'}(z') \leq \epsilon S_{M'}(z')$. \\
Next consider a $Z$ such that $M^{\epsilon}(Z) > M$ for the first time. If this never happens, then $Z = \infty$, and the result trivially holds for the entire domain. Then, if $z<Z$, due to the above analysis, it follows that $\delta S_M (z) \leq \epsilon S_M(z)$, for all $z < Z$. The only case remaining is if $z=0$: this is never possible unless the lowest degree in the expansion is greater than $M$ which is precluded by the assumption of there being lower order terms than $M$. This concludes the proof of the result.\\
\end{proof}
Using the result above, we can now prove Proposition \ref{LS-e}.

\begin{proof}
Recall that $\mf_t\mg + \Delta_t = [f_t^j]$ where $f_t^j = E[\delta f \delta x_t^j]$, and $\delta f = \sum_{k>0} F_t^k \delta x_t^k$. Thus, it follows that $\tilde{f}_t^j = \sum_{k=1}^M F_t^k E[\delta x_t^{j+k}]$, and $\Delta_t^j = \sum_{k>M} F_t^k E[\delta x_t^{j+k}]$.\\
First, suppose that $j$ is odd, then: $\tilde{f}_t^j = \sum_{k=1}^M F_t^k C_{j+k} \sigma_X^{j+k}$ where $E[\delta x_t^p] = C_p \sigma_x^p$, for some constant $C_p$. And similarly $\Delta_t^j = \sum_{k>M} F_t^k C_{j+k} \sigma_X^{j+k}$. Thus, $f_t^j (\sigma_X) = E[\delta f \delta x_t^j] = \sigma_X^{j} \sum_k F_t^k C_{j+k} \sigma_X^k$. Note that if the input distribution is zero mean Gaussian, then we only have even powers of $\sigma_X$. The above is simply a Taylor series expansion of the function $f_t^j(\sigma_X^2)$. Thus, using Lemma \ref{LS-e.1}, it follows that there exists $(\sigma_X^j)^2 < \infty$ such that $|\Delta_t^j (\sigma^2)| \leq \epsilon |\tilde{f}_t^j (\sigma^2)| $, for all $\sigma^2 \leq (\sigma_X^j)^2$. A similar argument holds for even $j$.\\
Next, choose $\sigma_X^2 = \min_j (\sigma_X^j)^2$. Then, using the above result, it follows that:
$|\Delta_t^j| \leq \epsilon|\tilde{f}_t^j|$, for all $j = 1, ..M$, thereby proving the result.\\
\end{proof}

\subsection{Proof of Corollary 2}
\begin{proof}
Note that $\delta \chi_t^R \delta \chi_t^{R'} = R \mg^R$, and $\mf_t^R = \mf_t  + R\Delta_t^R(\frac{(\mg^R)^{-1}}{R}) + \frac{1}{R}V_t^R \delta \chi_t^{R'} (\mg^R)^{-1}$. Defining $S \equiv  \Delta_t^R (\mg^R)^{-1} - \Delta_t \mg^{-1}$, and noting that $\mf_t$, $S$ etc. are all row vectors, we can show that:
\begin{align}
    E||\mf_t^R - (\mf_t + \Delta_t \mg^{-1})||^2 
= E||S||^2 +\frac{2}{R} E[S(V_t^R \delta \chi_t^{R'} (\mg^R)^{-1})']
+\frac{1}{R^2} E[ V_t^R \delta \chi_t^{R'}(\mg^R)^{-1}(\mg^R)^{-1} \delta \chi_t^R V_t^{R'}].
\end{align}
Since $\mg^R \rightarrow \mg$, $\Delta_t^R \rightarrow \Delta_t$ in m.s.s. as $R\rightarrow \infty$, and $\mg, \Delta_t$ are finite matrices, $E||S||^2 \rightarrow 0$ as $R \rightarrow \infty$, while $E[S'V_t^R \delta \chi_t^{R'}(\mg^R)^{-1}] = 0$, since $V_t^R$ is a zero mean sequence, and independent from $\delta \chi_t^R$, and hence from $S$ and $\mg^R$. The last term can be written as:
\begin{align}
    \frac{1}{R^2} E[ V_t^R \delta \chi_t^{R'}(\mg^R)^{-1}(\mg^R)^{-1} \delta \chi_t^R V_t^{R'}] &= \frac{1}{R^2} \text{Tr}( E[(\mg^R)^{-1} \delta \chi_t^R V_t^{R'} V_t^{R} \delta \chi_t^{R'} (\mg^R)^{-1}])\\ &= \frac{1}{R^2}\text{Tr}( E_{\mg^R} E[(\mg^R)^{-1} \delta \chi_t^R V_t^{R'} V_t^{R} \delta \chi_t^{R'} (\mg^R)^{-1}| \mg^R]]) \\
    & = \text{Tr}( [E_{\mg^R} [\frac{\sigma_v^2}{R} (\mg^R)^{-1}]),
\end{align}
where $\text{Tr}(A)$ denotes the trace of a matrix $A$, and $E_{\mg^R}[.]$ represents the expectation with respect to $\mg^R$. Next, we know that $\mg^R \rightarrow \mg$ almost surely as $R\rightarrow \infty$, and since $\mg$ is a finite matrix, it follows that $\text{Tr}( [E_{\mg^R} [\frac{\sigma_v^2}{R} (\mg^R)^{-1}])
 \rightarrow 0$ as $R \rightarrow \infty$. Therefore, $E||\mf_t^R - (\mf_t + \Delta_t \mg^{-1})||^2 \rightarrow 0 $ as $R \rightarrow \infty$, proving the result.
\end{proof}

\subsection{Model Free Solution of the PPE}
% \vspace{-2mm}
In the model-free method, rather than estimating $\mf_t$ and substituting into $\mathcal{B}_t$ in \eqref{PPE}, one can directly try to estimate the Taylor coefficients $\mk_t$ given those at the next time $\mk_{t+1}$. A Least squares estimate of the same is easily seen to be:
\begin{equation}
% $~~~~~~~~~~~
    \mk_t^R = \mathcal{C}_t + \mk^R_{t+1} (\delta \chi_{t+1}\delta \chi_t')(\delta \chi_t\delta\chi_t')^{-1},
% $\\
\end{equation}
where note that unlike in the model-based case where we use $\delta X_{t+1}$, we use\\ $\delta \chi_{t+1}^R = \begin{bmatrix}
\delta x_{t+1}^{(1)} &\cdots& \delta x_{t+1}^{(R)} \\
\vdots & \vdots &\vdots\\
(\delta x_{t+1}^{(1)})^M & \cdots & (\delta x_{t+1}^{(R)})^M
\end{bmatrix}$. 
In essence, the correlation $\delta \chi_{t+1}\delta \chi_t'$ allows one to deduce the matrix $\mathcal{B}_t$ in the PPE equation \eqref{PPE} rather than using the known structure of $\mathcal{B}_t$. 

The accuracy and variance of the solution are amenable to an identical analysis as we did for the model-based case, and it should not come as a surprise that even in this case, it is determined by the convergence and variance of the empirical Gram matrix $\mg^R$.\\

% \subsection{Model-based or Model-free?}
\noindent \textbf{Model-based or Model-free?}\\
The key to computing the solution is doing the product $\delta \chi_{t+1}\delta \chi_t$, the number of computations are $\mathcal{O}(2R^2M^2)$, while that in the model-based case  for the product $\delta X_{t+1}\delta \chi_t'$ is $\mathcal{O}((R^2+R)M^2)$. Since $R$ is typically going to be large, this shows that we need to do approximately $2 \times$  the computation to get the answer in the model-free method, since it is trying to infer the matrix $\mathcal{B}_t$ rather than using the inherent structure. Further, given the same number of computations, we would be able to do $\approx 2R$ simulations to the $R$ of the model-free method, and thus, the variance in the empirical Gram matrix $\mg^R$ is $\frac{1}{2}$ times that of the model-free empirical Gram matrix. Again, this shows that we should prefer the model based approach, either to save computation or reduce the variance of the solution.

\section{The Discounted Infinite Horizon Case}
Thus far, we have considered the finite horizon optimal control case. Now, we consider the more prevalent discounted infinite horizon case. Again, we consider the deterministic case first, and then the stochastic case. It is always possible to turn any discounted infinite horizon case into a finite horizon problem with a sufficiently long horizon, and a zero terminal cost. In this section, we concentrate on the infinite horizon policy evaluation problem, given the dynamics $x_{t+1}= f(x_t)$. In essence, we want to evaluate the cost of the above dynamics in the discounted case. To that end, let $c(x)$ denote the incremental cost and let $\beta < 1 $ denote a discount factor.
the discounted cost-to-go given an initial condition $x_0$ is given by:
$J(x_0) = \sum_{t=0}^{\infty} \beta^t c(x_t)$,
where $x_t$ evolves according to the dynamics above. The cost function $J$ obeys the following stationary equation, given any state $x$:
\begin{equation}\label{PE_beta}
    J(x) = c(x) + \beta J(f(x)). 
\end{equation}
Next, consider a nominal trajectory, given some initial condition $x_0$, say $\bar{x}_{t+1} = f(\bar{x}_t)$. Next, let us do expansions of the cost about this nominal trajectory, as we did previously, $J(x_t) = J(\bar{x}_t + \delta x_t) = J(\bar{x}_t) + K_t^1 \delta x_t + K_t^2 \delta x_t^2 + \cdots$, $c(x_t) = c(\bar{x}_t + \delta x_t) = c(\bar{x}_t) + C_t^1 \delta x_t + C_t^2 \delta x_t^2+ \cdots$, and 
\begin{equation}
    J_{t+1}(f(x_t)) = J_{t+1}(f(\bar{x}_t) + \delta f_t) = J_{t+1}(f(\bar{x}_t)) + K_{t+1}^1 \delta f_t + K_t^2 \delta f_t^2 + \cdots,
\end{equation}
where $\delta f_t = F_t^1 \delta x_t + F_t^2 \delta x_t^2 + \cdots$. Substituting into \eqref{PE_beta}, and equating the different powers of $\delta x_t$ on both sides, we obtain:
\begin{align} \label{PPE_beta}
    \bar{J}_t = \bar{c}_t &+ \beta \bar{J}_{t+1}, \nonumber\\
    K_t^1 = C_t^1 &+ \beta K_{t+1}^1 F_t^1, \nonumber\\
    K_t^2 = C_t^2 &+ \beta(K_{t+1}^1 F_t^2 + K_{t+1}^2 (F_t^1)^2), \\
    &\vdots \nonumber
\end{align}
where $\bar{J}_t = J(\bar{x}_t)$, $\bar{c}_t = c(\bar{x}_t)$, and $\bar{J}_{t+1} = J_{t+1}(f(\bar{x}_t))$, with terminal condition $J_T (x) = 0$ for any $x$. Again, note the beautiful perturbation structure of the above equations in that the higher order terms do not affect the calculations of the lower order terms, and thus, we may close our computations at any desired order. \\
However, one has to be careful about using the above perturbation equations. An implicit assumption here is that $T$ is large enough such that $J_t(x)$ becomes stationary, i.e., converges to some $J_{\infty}(x)$, $J_t(x) \approx J_{\infty}(x)$, for all $ t < \bar{t}$ where $T-\bar{t} \ll T$.  Moreover, we should use the solution to the above equations \eqref{PPE_beta} only after the initial transient time $\bar{t}$ has passed, i.e., we use it only $J_t$ for $t< \bar{t}$. The situation is illustrated in Fig. \ref{PE_discount}. Further, we should note that albeit $J_{\infty}(x)$ is time invariant, when we expand it locally around a nominal trajectory, the result expansions are time-varying (again, see Fig. \ref{PE_discount}). \\
\begin{figure}[h!]
    \vspace{-0.8in}
    \centering
    \includegraphics[width=10cm,height=12cm,keepaspectratio]{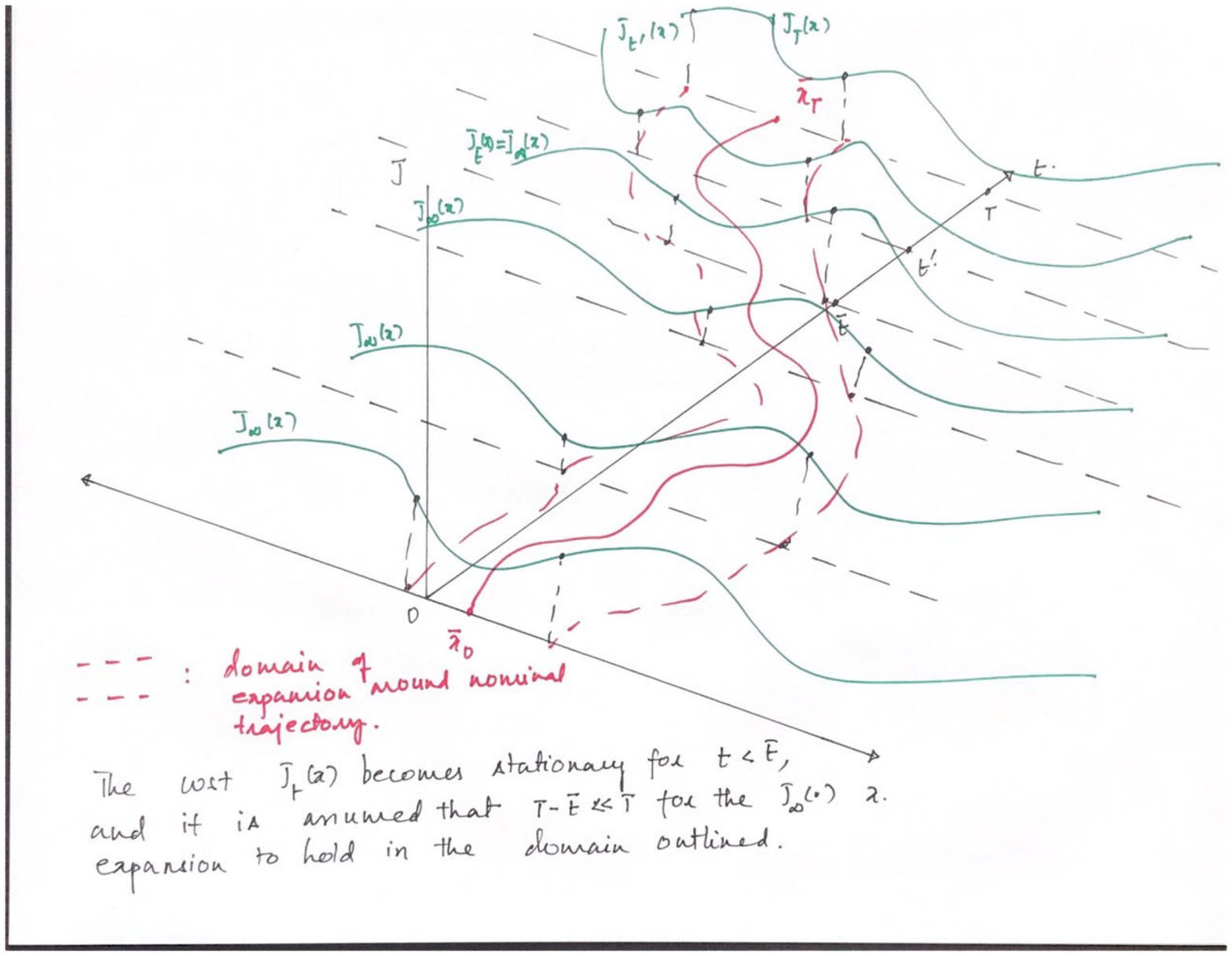}   
    \vspace{-1in}
    \caption{The Discounted Infinite Horizon Case}
    \label{PE_discount}
\end{figure}
Moreover, just like in the finite horizon case, One may also choose not to do a time varying decomposition about a trajectory, and instead do an expansion about the time invariant trajectory, $\bar{x}(t) = 0$, assuming that $f(0) = 0$. However, identical to the situation in the finite horizon case, most initial conditions $x_0$ would be far from the origin, and thus, we would need far more terms to make an accurate approximation when compared to the time varying case, thereby drastically increasing the computational burden of the method. \\
The policy evaluation can also be done in an RL fashion as was done previously for the finite horizon case in an almost identical fashion by estimating $\mf_t$, or directly by using $\delta \chi_{t+1}$, the only caveat again being that we use the solution only after it has equilibriated, i.e., for $t < \bar{t}$. The stochastic case remains intractable owing again to the lack of a perturbation structure, and thus, any RL solution is bound to have high variance, and thus, be inaccurate.\\

\begin{remark}
\textit{The Method of Characteristics.} The equation \ref{pol_eval}, when written in continuous time becomes the PDE: $\frac{\partial J}{\partial t} + c + f \frac{\partial J}{\partial x} = 0$, where the dynamics are now in continuous time, i.e., $\dot{x} = f(x)$ and the cost function is given by: $\int_0^T c(x_t)dt$. The classical method of characteristics reduces the above PDE into a family of ordinary differential equations (ODE) called the characteristic ODEs/ Lagrange-Charpit equations, in terms of the state $x$ and the co-state $q = \frac{\partial J}{\partial x}$, given terminal conditions $\bar{x}_T$ and $\bar{q}_t = \frac{\partial g}{\partial x}|_{\bar{x}_T}$ \cite{Courant-Hilbert} (see Section VII B). The PPE equations \eqref{PPE} are the discrete time analogs of the perturbation expansion of the characteristic ODEs about the nominal characteristic curve $(\bar{x}_t, \bar{q}_t)$. Next, consider the infinite horizon equation \eqref{PE_beta}: this is the discrete time analog of the PDE $\frac{\partial J}{\partial t} = c + \beta f \frac{\partial J}{\partial x}$, where the cost function now is given by $\int_0^{\infty} c(x_t) e^{-\beta t} dt$. In this case, the  perturbation equations \eqref{PPE_beta} are the discrete time analogs of the perturbation expansions of the characteristic ODEs around the nominal path $\dot{\bar{x}} = f(\bar{x})$. In this case, an additional condition is that after a suitable amount of time, the solution equilibriates, i.e., $\frac{\partial J_{\infty}}{\partial t} =  c+ \beta f\frac{\partial J_{\infty}}{\partial x} = 0$. The stationary PDE is difficult to solve and typically is solved by evolving the time varying PDE till the solution becomes stationary, which is exactly what is done when using \eqref{PPE_beta} in discrete time.\\
\end{remark}

\begin{remark}
\textit{Koopman Operator.} In recent years, there has been an increasing interest in the Koopman operator approach to the study of nonlinear systems \cite{Koopman1}. The idea is to consider the recursive dynamic map $\mk_f \cdot g \equiv g(f(x))$, where $g(\cdot)$ is termed an observable and to solve for this recursive map in a data based fashion. We note that the Koopman recursive map is equivalent to the policy evaluation equation \eqref{pol_eval}, with the cost $c(x) = 0$, and the terminal condition given by $g(x)$. Thus, any data based solution to the Koopman problem also suffers from the same variance and convergence issues as those afflicting policy evaluation, when it is solved in an RL/ data based fashion.
\end{remark}
% \input{A_Near_Optimal_Decoupling_Principle.tex}

% \vfill

% \end{document}

% \begin{comment}
% \newpage
% \vspace*{\fill}
% \begingroup
% \centering
% \textbf{\Large Supplementary Material \RomanNumeralCaps{2}: Paper 1}
 
% \endgroup
% \vspace*{\fill}

% \newpage
% \vspace*{\fill}
% \begingroup
% \centering
% \textbf{\Large Supplementary Material \RomanNumeralCaps{3}: Paper 2}
 
% \endgroup
% \vspace*{\fill}

% \end{comment}

\end{document}